\title{Amortized Inference Regularization}
\author{
  Rui Shu\\
  Stanford University\\
  \texttt{ruishu@stanford.edu}
  \And
  Hung H. Bui\\
  DeepMind\\
  \texttt{buih@google.com}
  \And
  Shengjia Zhao\\
  Stanford University\\
  \texttt{sjzhao@stanford.edu}\\
  \And
  Mykel J. Kochenderfer\\
  Stanford University\\
  \texttt{mykel@stanford.edu}
  \And
  Stefano Ermon\\
  Stanford University\\
  \texttt{ermon@cs.stanford.edu}
}
\begin{document}
% \nipsfinalcopy is no longer used

\maketitle

\begin{abstract}
The variational autoencoder (VAE) is a popular model for density estimation and representation learning. 
%that optimizes the variational lower bound of a deep latent variable model. 
Canonically, the variational principle suggests to prefer an expressive inference model so that the variational approximation is accurate. 
However, it is often overlooked that an overly-expressive inference model can be detrimental to the test set performance of both the amortized posterior approximator and, more importantly, the generative density estimator.
% However, VAE relies on {\em amortized} inference, \todo{this abstract isn't quite the same message as what's written in the intro.}
% and it is often overlooked that an overly-expressive amortization family can be detrimental to the test set performance of both the amortized posterior approximator and, more importantly, the generative density estimator.
%We demonstrate the importance of carefully regularizing the amortization family in VAEs.
In this paper, we leverage the fact that VAEs rely on \emph{amortized} inference and propose techniques for \emph{amortized inference regularization} (AIR) that control the smoothness of the inference model. We demonstrate that, by applying AIR, it is possible to improve VAE generalization on both inference and generative performance. 
%Furthermore, when the variational family is expanded via importance sampling, stronger levels of AIR might be crucial role in controlling the generalization of the generative model. 
Our paper challenges the belief that amortized inference is simply a mechanism for approximating maximum likelihood training and illustrates that regularization of the amortization family provides a new direction for understanding and improving generalization in VAEs.

%provide a black-box variational inference mechanism for training deep latent variable models via the principle of maximizing the log likelihood (or more precisely, its variational lower bound). 
%However, it is well-established that likelihood maximization overfits the training data if the model class is sufficiently complex. Given that Variational Autoencoders parameterize both inference and generative models with deep neural networks, it is therefore peculiar that VAEs generalize beyond the training set. While this non-overfitting behavior may in part be explained by the restricted variational family, we argue that a much stronger source of regularization is the implicit regularization conferred by training deep variational autoencoder models with stochastic gradient descent. We demonstrate that consequences of this regularization in the learning of disentangled representations and test-set likelihood evaluation.
\end{abstract}

%\section{Introduction {\color{red} [NOTE: Comments Temporarily Hidden]}}
\section{Introduction}

%\s{pick one between proposal distribution / inference distribution/ variational inference}

Variational autoencoders are a class of generative models with widespread applications in density estimation, semi-supervised learning, and representation learning \cite{kingma2013auto,kingma2014semi,kim2018disentangling,chen2018isolating}. A popular approach for the training of such models is to maximize the log-likelihood of the training data. However, maximum likelihood is often intractable due to the presence of latent variables. Variational Bayes resolves this issue by constructing a tractable lower bound of the log-likelihood and maximizing the lower bound instead. Classically, Variational Bayes introduces per-sample approximate proposal distributions that need to be optimized using a process called variational inference. However, per-sample optimization incurs a high computational cost. A key contribution of the variational autoencoding framework is the observation that the cost of variational inference can be amortized by using an amortized inference model that learns an efficient mapping from samples to proposal distributions. This perspective portrays amortized inference as a tool for efficiently approximating maximum likelihood training. Many techniques have since been proposed to expand the expressivity of the amortized inference model in order to better approximate maximum likelihood training \cite{kim2018semi,kingma2016improved,sonderby2016ladder,burda2015importance}.
%In order to approximate maximum likelihood training, it is important to have an inference model that is highly expressive. 

In this paper, we challenge the conventional role that amortized inference plays in variational autoencoders. For datasets where the generative model is prone to overfitting, we show that having an amortized inference model actually provides a new and effective way to regularize maximum likelihood training. Rather than making the amortized inference model more expressive, we propose instead to restrict the capacity of the amortization family. Through amortized inference regularization (AIR), we show that it is possible to reduce the inference gap and increase the log-likelihood performance on the test set. We propose several techniques for AIR and provide extensive theoretical and empirical analyses of our proposed techniques when applied to the variational autoencoder and the importance-weighted autoencoder. By rethinking the role of the amortized inference model, amortized inference regularization provides a new direction for studying and improving the generalization performance of latent variable models.

\section{Background and Notation}
\subsection{Variational Inference and the Evidence Lower Bound}

Consider a joint distribution $p_\theta(x, z)$ parameterized by $\theta$, where $x \in \X$ is observed and $z \in \Z$ is latent. Given a uniform distribution $\phat(x)$ over the dataset $\D = \seta{x^{(i)}}$, maximum likelihood estimation performs model selection 
%\sj{parameter selection?} 
%\rs{to me, they mean the same thing?}
using the objective
\begin{align}
\max_{\theta} \Expect_{\phat(x)} \ln p_\theta(x) 
= \max_{\theta} \Expect_{\phat(x)} \ln \int_z p_\theta(x, z) dz.
\end{align}
However, marginalization of the latent variable is often intractable; to address this issue, it is common to employ the variational principle to maximize the following lower bound
\begin{align}
\max_\theta \Expect_{\phat(x)} \brac{\ln p_\theta(x) - \min_{q \in \Q} \KL{q(z)}{p_\theta(z \giv x)}} = \max_\theta \Expect_{\phat(x)} \brac{\max_{q \in \Q}\Expect_{q(z)} \ln \frac{p_\theta(x, z)}{q(z)}}, \label{eq:elbo}
\end{align}
where $D$ is the Kullback-Leibler divergence and $\Q$ is a variational family. This lower bound, commonly called the evidence lower bound (ELBO), converts log-likelihood estimation into a tractable
%\sj{not really tractable?} 
%\rs{depends on how tractability is defined. I'm willing to overlook this technicality}
optimization problem. Since the lower bound holds for any $q$, the variational family $\Q$ can be chosen to ensure that $q(z)$ is easily computable, and the lower bound is optimized to select the best proposal distribution $q^*_x(z)$ for each $x \in \D$.

\subsection{Amortization and Variational Autoencoders}
%\rs{I think a better transition will mention that the prior and likelihood functions are carefully chosen to achieve conjugacy, which enables closed-form updates. And that this constraint is broken by VAEs. And then talk about black-box inference, followed by amortized inference. \cite{ranganath2014black}}
%\mk{avoid using numerical refs as nouns, especially at start of sentences; here and elsewhere in this paper} 
%\rs{Personally, I prefer (author, year) citations. But if forced to use numerical refs, I prefer to be able to use numerical refs directly as nouns.}
\cite{kingma2013auto,rezende2014stochastic} proposed to construct $p(x \giv z)$ using a parametric function $g_\theta \in \G(\P): \Z \to \P$,
%\sj{$\G$ as a function of $\P$ is confusing. Maybe just $\G$}
%\rs{We need to specify $\P$ so that that dependency of the VAE objective on $\P$ is captured. Unfortunately, I can't think of a more elegant way to do it.}
where $\P$ is some family of distributions over $x$, and $\G$ is a family of functions indexed by parameters $\theta$.
%\mk{in an earlier version, we had $\X$ and $\Z$ defined in the previous subsection; we probably need to bring that back so that we can keep all the variables properly defined; probably also need to explain what $\G$ is. Although those two refs may have used neural networks, we should probably only mention that fact if it is relevant; otherwise it is unclear whether this discussion is relevant to all parametric models or whether there is something important about a neural network.}. 
To expedite training, they observed that it is possible to amortize the computational cost of variational inference by framing the per-sample optimization process as a \emph{regression} problem; rather than solving for the optimal proposal $q^*_x(z)$ directly, they instead use a recognition model $f_\phi \in \F(\Q): \X \to \Q$ to predict $q^*_x(z)$. 
%\sj{As before I would remove the conditioning on $\Q$}
The functions $(f_\phi, g_\theta)$ can be concisely represented as conditional distributions, where 
\begin{align}
p_\theta(x \giv z) &= g_\theta(z)(x) \\
q_\phi(z \giv x) &= f_\phi(x)(z).
\end{align}
The use of amortized inference yields the variational autoencoder, which is trained to maximize the variational autoencoder objective
\begin{align}
\max_{\theta, \phi} \Expect_{\phat(x)} \brac{\Expect_{q_\phi(z \giv x)} \ln \frac{p(z)p_\theta(x \giv z)}{q_\phi(z \giv x)}} 
= \max_{f \in \F(\Q), g \in \G(\P)} 
\Expect_{\phat(x)} 
\brac{\Expect_{z \sim f(x)} \ln \frac{p(z)g(z)(x)}{f(x)(z)}}. \label{eq:vae}
\end{align}
We omit the dependency of $(p(z), g)$ on $\theta$ and $f$ on $\phi$ for notational simplicity. In addition to the typical presentation of the variational autoencoder objective (LHS), we also show an alternative formulation (RHS) that reveals the influence of the model capacities $\F, \G$ and distribution family capacities $\Q, \P$ on the objective function. In this paper, we use $(q_\phi, f)$ interchangeably, depending on the choice of emphasis. To highlight the relationship between the ELBO in \cref{eq:elbo} and the standard variational autoencoder objective in \cref{eq:vae}, we shall also refer to the latter as the amortized ELBO.

\subsection{Amortized Inference Suboptimality}
For a fixed generative model, the optimal unamortized and amortized inference models are
\begin{align}
q^*_x &= \argmax_{q \in \Q} \Expect_{q(z)} \brac{\ln \frac{p_\theta(x, z)}{q(z)}}, \text{for each } x \in \D \label{eq:optimalq}\\
f^* &= \argmax_{f \in \F} \Expect_{\phat(x)} \brac{\Expect_{z\sim f(x)} \ln \frac{p_\theta(x, z)}{f(x)(z)}}.
\end{align}

A notable consequence of using an amortization family to approximate variational inference is that \cref{eq:vae} is a lower bound of \cref{eq:elbo}. This naturally raises the question of whether the learned inference model can accurately approximate the mapping $x \mapsto q^*_x(z)$. To address this question, \cite{cremer2018inference} defined the inference, approximation, and amortization gaps as
\begin{align}
\Delta_\infer(\phat) &= \Expect_{\phat(x)}\KL{f^*(x)}{p_\theta(z \giv x)}\\
\Delta_\ap(\phat) &= \Expect_{\phat(x)}\KL{q^*_x(z)}{p_\theta(z \giv x)}\\
\Delta_\am(\phat) &= \Delta_\infer(\phat) - \Delta_\ap(\phat),
\end{align}
Studies have found that the inference gap is non-negligible \cite{wu2016quantitative} and primarily attributable to the presence of a large amortization gap \cite{cremer2018inference}.

The amortization gap raises two critical considerations. On the one hand, we wish to reduce the training amortization gap $\Delta_\am(\phat_\tr)$. If the family $\F$ is too low in capacity, then it is unable to approximate $x \mapsto q^*_x$ and will thus increase the amortization gap. Motivated by this perspective, \cite{kim2018semi,krishnan2017challenges} proposed to reduce the training amortization gap by performing stochastic variational inference on top of amortized inference. In this paper, we take the opposing perspective that an over-expressive $\F$ hurts generalization (see \cref{app:large_encoder}) and that restricting the capacity of $\F$ is a form of regularization that can prevent both the inference \emph{and} generative models from overfitting to the training set. 

% \s{give some intuition for the overfitting problem}

% \s{people here might wonder how specific this is to amortization. if there was not amortization, would you still want to regularize/cripple the inference?} \rs{The answer should be yes. However, we do not have the necessary experiment to back this claim up.}

% \rs{The transition from Section 2.3 to 3 needs work. I need to point out to the reader that amortized inference is most effective if there is an underlying structure to how x maps to q. By forcing a smooth amortized inference model, we force the mapping to take on a particular structure. One of the experiments should probably verify that smaller training gap does not imply better generative models.}

% \rs{Have a plot of training delta versus log-likelihood. Say that the current line of thinking presupposes that approximating maximum likelihood training is a good thing.}

\section{Amortized Inference Regularization in Variational Autoencoders}

Many methods have been proposed to expand the variational and amortization families in order to better approximate maximum likelihood training \cite{kim2018semi,kingma2016improved,sonderby2016ladder,burda2015importance,maaloe2016auxiliary,ranganath2016hierarchical}. We argue, however, that achieving a better approximation to maximum likelihood training is not necessarily the best training objective, even if the end goal is test set density estimation. In general, it may be beneficial to regularize the maximum likelihood training objective. 

Importantly, we observe that the evidence lower bound in \cref{eq:elbo} admits a natural interpretation as implicitly regularizing maximum likelihood training
\begin{align}
\max_{\theta} \parenb{\overbrace{\Expect_{\phat(x)}\brac{\ln p_\theta(x)}}^{\text{log-likelihood}} - \overbrace{\Expect_{\phat(x)}\min_{q \in \Q}{\KL{q(z)}{p_\theta(z \giv x)}}}^{\text{regularizer}~ R(\theta\scolon \Q)}}.
\end{align}

This formulation exposes the ELBO as a \emph{data-dependent regularized} maximum likelihood objective. For infinite capacity $\Q$, $R(\theta\scolon \Q)$ is zero for all $\theta \in \Theta$, and the objective reduces to maximum likelihood. When $\Q$ is the set of Gaussian distributions (as is the case in the standard VAE), then $R(\theta\scolon \Q)$ is zero only if $p_\theta(z \giv x)$ is Gaussian for all $x \in \D$. In other words, a Gaussian variational family regularizes the true posterior $p_\theta(z \giv x)$ toward being Gaussian \cite{cremer2018inference}. Careful selection of the variational family to encourage $p_\theta(z \giv x)$ to adopt certain properties (e.g. unimodality, fully-factorized posterior, etc.) can thus be considered a special case of \emph{posterior regularization} \cite{ganchev2010posterior,zhu2014bayesian}.

%\rs{Talk about the second e.q. in this paragraph. Maybe even move the second equation here. Talk about limitation of Q-only regularization. It does not take into account the global structure of how x maps to q. Talk about restricting the set of possible mappings. F introduces an entirely new way to think about regularizing the posterior. Say that a curious property of F is that we get to choose how much the true posterior is susceptible to overfitting.}

Unlike traditional variational techniques, the variational autoencoder introduces an amortized inference model $f \in \F$ and thus a new source of posterior regularization. 
\begin{align}
\max_{\theta} \parenb{\overbrace{\Expect_{\phat(x)}\brac{\ln p_\theta(x)}}^{\text{log-likelihood}} - \overbrace{\min_{f \in \F(\Q)}\Expect_{\phat(x)}\brac{\KL{f(x)}{p_\theta(z \giv x)}}}^{\text{regularizer}~ R(\theta\scolon \Q, \F)}}.
\end{align}
In contrast to unamortized variational inference, the introduction of the amortization family $\F$ forces the inference model to consider the \emph{global structure} of how $\X$ maps to $\Q$. We thus define \emph{amortized inference regularization} as the strategy of restricting the inference model capacity $\F$ to satisfy certain desiderata. In this paper, we explore a special case of AIR where a candidate model $f \in \F$ is penalized if it is not sufficiently smooth. We propose two models that encourage inference model smoothness and demonstrate that they can reduce the inference gap and increase log-likelihood on the test set. 
%\sj{The notation in this section is really good and clear} 
%\rs{c:}

\subsection{Denoising Variational Autoencoder}

In this section, we propose using random perturbation training for amortized inference regularization. The resulting model\textemdash the denoising variational autoencoder (DVAE)\textemdash modifies the variational autoencoder objective by injecting $\veps$ noise into the inference model
\begin{align}\label{eq:dvae}
\max_{\theta} \parenb{\Expect_{\phat(x)}\brac{\ln p_\theta(x)} - \min_{f \in \F(\Q)}\Expect_{\phat(x)}\Expect_{\veps}\brac{\KL{f(x + \veps)}{p_\theta(z \giv x)}}}.
\end{align}
Note that the noise term only appears in the regularizer term. We consider the case of zero-mean isotropic Gaussian noise $\veps \sim \Normal(\0, \sigma\I)$ and denote the denoising regularizer as $R(\theta\scolon \sigma)$. At this point, we note that the DVAE was first described in \cite{im2017denoising}. However, our treatment of DVAE differs from \cite{im2017denoising}'s in both theoretical analysis and underlying motivation. We found that \cite{im2017denoising} incorrectly stated the tightness of the DVAE variational lower bound (see \cref{app:dvae}). In contrast, our analysis demonstrates that the denoising objective smooths the inference model and necessarily lower bounds the original variational autoencoder objective (see \cref{thm:smooth,prop:strength}). 

We now show that 1) the optimal DVAE amortized inference model is a kernel regression model and that 2) the variance of the noise $\veps$ controls the smoothness of the optimal inference model.

\begin{restatable}{lemma}{PropKernel}\label{prop:kernel}
For fixed $(\theta, \sigma, \Q)$ and infinite capacity $\F$, the inference model that optimizes the DVAE objective in \cref{eq:dvae} is the kernel regression model
\begin{align}
f^*_\sigma(x) = \argmin_{q \in \Q} \sum_{i=1}^n w_\sigma(x, x^{(i)}) \cdot \KL{q(z)}{p_\theta(z \giv x^{(i)})},\label{eq:kernel}
\end{align}
where $w_\sigma(x, x^{(i)}) = \frac{K_\sigma(x, x^{(i)})}{\sum_j K_\sigma(x, x^{(j)})}$ and $K_\sigma(x, y) = \exp\paren{-\frac{\|x - y\|}{2\sigma^2}}$ is the RBF kernel.
\end{restatable}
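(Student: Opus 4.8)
The plan is to use the fact that $\phat$ is uniform over the finite dataset $\D = \seta{x^{(i)}}$ and that the only $f$-dependent term in \cref{eq:dvae} is the denoising regularizer, so the problem reduces to the infinite-capacity minimization
\begin{align}
\min_{f \in \F(\Q)}\ \frac{1}{n}\sum_{i=1}^n \Expect_{\veps}\brac{\KL{f(x^{(i)} + \veps)}{p_\theta(z \giv x^{(i)})}}. \nonumber
\end{align}
First I would absorb the noise into the evaluation point of $f$ via the change of variables $y = x^{(i)} + \veps$. Writing the isotropic Gaussian noise density as $\rho(\veps)$, each inner expectation becomes $\int \rho(y - x^{(i)})\,\KL{f(y)}{p_\theta(z \giv x^{(i)})}\, dy$, where $\rho(y - x^{(i)})$ coincides, up to a normalizing constant that is independent of both $i$ and $y$, with the RBF kernel $K_\sigma(y, x^{(i)})$ appearing in \cref{eq:kernel}. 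The crucial point is that the noise recenters the \emph{argument} of $f$ while leaving the target posterior $p_\theta(z \giv x^{(i)})$ fixed at the clean data point.

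Next I would interchange the finite sum and the integral to express the whole regularizer (up to the constant factor) as a single integral over $y$ of the pointwise quantity $\sum_{i} K_\sigma(y, x^{(i)})\,\KL{f(y)}{p_\theta(z \giv x^{(i)})}$. Since $\F$ has infinite capacity, the value $f(y) \in \Q$ may be selected independently at each $y$ with no coupling across inputs, so minimizing the integral is equivalent to minimizing its integrand pointwise. Fixing an arbitrary $y$ gives the minimizer
\begin{align}
f^*_\sigma(y) = \argmin_{q \in \Q} \sum_{i=1}^n K_\sigma(y, x^{(i)})\, \KL{q(z)}{p_\theta(z \giv x^{(i)})}. \nonumber
\end{align}
Dividing this objective by the positive constant $\sum_j K_\sigma(y, x^{(j)})$ (which does not change the $\argmin$ over $q$) converts the raw kernel values into the normalized weights $w_\sigma(y, x^{(i)})$, recovering \cref{eq:kernel}.

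The main obstacle I anticipate is making the pointwise-minimization step rigorous rather than heuristic. Two points need care. First, the interchange of the sum and integral with the $\min$ over $f$ relies on the infinite-capacity assumption that $\F(\Q)$ contains every measurable map $\X \to \Q$, so that a pointwise choice of $f(y)$ yields an admissible $f$; to close this I would invoke a measurable-selection argument guaranteeing that $y \mapsto f^*_\sigma(y)$ is itself measurable, which follows from the joint continuity of the weighted-KL objective in $(y, q)$. Second, I should verify that the inner $\argmin$ is attained and well-posed in $\Q$: because a nonnegatively weighted sum of KL divergences is convex in its first argument $q$ and $\Q$ is the (Gaussian) variational family used in the VAE, the minimizer exists and is unique, so $f^*_\sigma$ is well-defined. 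The remaining steps — the change of variables and the cancellation of the $i$- and $y$-independent normalization constant — are routine.
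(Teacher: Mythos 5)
Your proposal is correct and is essentially the paper's own argument: the paper writes the regularizer as an expectation under the joint $\phat(x,\tilde x)=\phat(x)\Normal(\tilde x\giv x,\sigma\I)$, swaps to $\Expect_{\phat(\tilde x)}\min_{q\in\Q}\Expect_{\phat(x\giv\tilde x)}[\cdot]$, and notes the bound is tight under infinite capacity — which is exactly your change of variables, sum--integral interchange, and pointwise minimization, with $\phat(x^{(i)}\giv\tilde x)$ being precisely the normalized weight $w_\sigma(\tilde x,x^{(i)})$. Your added remarks on measurable selection and well-posedness of the inner $\argmin$ are refinements the paper omits, not a different route.
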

%\s{make statement more precise. explicitly define optimal denoising inference model. opt w.r.t. to what objective? maybe even infinite capacity (not sure that is technically rigorous)}
%\s{not clear what the sums are over}
% \iffalse
% \begin{proof}
% Define $\tilde{x} = x + \veps$ and 
% $\phat(x, \tilde{x}) = \phat(x)\Normal(\tilde{x} \giv x, \sigma\I)$. Rewrite the objective as
% \begin{align}
% R(\theta \scolon \sigma) = \min_{f \in \F(\Q)}\Expect_{\phat(x, \tilde{x})}\brac{\KL{f(\tilde{x})}{p_\theta(z \giv x)}},
% \end{align}
% The objective is lower bounded by $\Expect_{\phat(\tilde{x})}\min_{q \in \Q}\Expect_{\phat(x \giv \tilde{x})}\brac{\KL{q(z)}{p_\theta(z \giv x)}}$. Recall that $\F$ has infinite capacity. This lower bound is tight since we can select $f^*_\sigma \in \F$ such that
% \begin{align}
% f^*_\sigma(\tilde{x}) = \argmin_{q \in \Q} \Expect_{\phat(x \giv \tilde{x})} \KL{q(z)}{p_\theta(z \giv x)}.\label{eq:prekernel}
% \end{align}
% Re-expressing \cref{eq:prekernel} by expanding $\phat(x \giv \tilde{x})$ yields \cref{eq:kernel}.
% \end{proof}
% \fi

\cref{prop:kernel} shows that the optimal denoising inference model $f^*_\sigma$ is dependent on the noise level $\sigma$. The output of $f^*_\sigma(x)$ is the proposal distribution that minimizes the weighted Kullback-Leibler (KL) divergence from $f^*_\sigma(x)$ to each $p_\theta(z \giv x^{(i)})$, where the weighting $w_\sigma(x, x^{(i)})$ depends on the distance $\|x - x^{(i)}\|$ and the bandwidth $\sigma$. When $\sigma > 0$, the amortized inference model forces neighboring points $(x^{(i)}, x^{(j)})$ to have similar proposal distributions. Note that as $\sigma$ increases, $w_\sigma(x, x^{(i)}) \to \frac{1}{n}$, where $n$ is the number of training samples. Controlling $\sigma$ thus modulates the smoothness of $f_\sigma^*$ (we say that $f_\sigma^*$ is smooth if it maps similar inputs to similar outputs under some suitable measure of similarity). Intuitively, the denoising regularizer $R(\theta \scolon \sigma)$ approximates the true posteriors with a ``$\sigma$-smoothed'' inference model and penalizes generative models whose posteriors cannot easily be approximated by such an inference model. This intuition is formalized in \cref{thm:smooth}.

\begin{restatable}{theorem}{ThmSmooth}\label{thm:smooth}
Let $\Q$ be a minimal exponential family with corresponding natural parameter space $\Omega$. With a slight abuse of notation, consider $f \in \F: \X \to \Omega$. Under the simplifying assumption that $p_\theta(z \giv x^{(i)})$ is contained within $\Q$ and parameterized by $\eta^{(i)} \in \Omega$, and that $\F$ has infinite capacity, then the optimal inference model in \cref{prop:kernel}
returns $f^*_\sigma(x) = \eta\in\Omega$, where
\begin{align}
\eta = \sum_{i=1}^n w_\sigma(x, x^{(i)}) \cdot \eta^{(i)}
\end{align}
and Lipschitz constant of $f^*_\sigma$ is bounded by $O(1/\sigma^2)$.
\end{restatable}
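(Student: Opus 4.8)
The plan is to exploit the exponential-family structure to convert the weighted KL objective of \cref{prop:kernel} into a Bregman-centroid problem, solve that centroid in closed form, and then differentiate the resulting kernel-regression map to read off its Lipschitz constant.

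First I would record the standard identity for a minimal exponential family $q_\eta(z) \propto h(z)\exp(\langle \eta, T(z)\rangle - A(\eta))$ with log-partition function $A$: for any two members with natural parameters $\eta, \eta'$,
\begin{align}
\KL{q_\eta}{q_{\eta'}} = A(\eta') - A(\eta) - \langle \nabla A(\eta), \eta' - \eta\rangle,
\end{align}
that is, the Bregman divergence $B_A(\eta' \,\|\, \eta)$ generated by $A$, with the optimization variable $\eta$ sitting in the \emph{base-point} slot. Under the assumption that each $p_\theta(z \giv x^{(i)})$ lies in $\Q$ with natural parameter $\eta^{(i)}$, substituting this identity into \cref{eq:kernel} turns the inner DVAE inference objective into the weighted sum of Bregman divergences $\sum_i w_\sigma(x, x^{(i)})\, B_A(\eta^{(i)} \,\|\, \eta)$.

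Next I would solve this inner minimization. Differentiating in $\eta$, each summand contributes gradient $-\nabla^2 A(\eta)(\eta^{(i)} - \eta)$, so stationarity reads $\nabla^2 A(\eta)\sum_i w_\sigma(x, x^{(i)})(\eta^{(i)} - \eta) = 0$. Minimality of the family guarantees that the Hessian $\nabla^2 A(\eta)$ (the Fisher information) is positive definite, so it cancels; together with $\sum_i w_\sigma(x, x^{(i)}) = 1$ this gives $\eta = \sum_i w_\sigma(x, x^{(i)})\,\eta^{(i)}$, exactly the claimed form, and convexity of $A$ certifies this stationary point is the global minimizer. For the Lipschitz bound I would then treat $f^*_\sigma(x) = \sum_i w_i \eta^{(i)}$ (writing $w_i = w_\sigma(x, x^{(i)})$) as Nadaraya--Watson regression with softmax weights $w_i = \mathrm{softmax}(s)_i$, $s_i = \log K_\sigma(x, x^{(i)})$, and differentiate. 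Using the softmax Jacobian $\partial w_i / \partial s_k = w_i(\delta_{ik} - w_k)$ the map collapses to a weighted cross-covariance,
\begin{align}
\nabla_x f^*_\sigma(x) = \sum_i w_i\,(\eta^{(i)} - \bar\eta)\,\bigl(\nabla_x s_i - \textstyle\sum_k w_k \nabla_x s_k\bigr)^\top, \qquad \bar\eta = \sum_i w_i \eta^{(i)} = f^*_\sigma(x),
\end{align}
between the $\eta^{(i)}$ and the score vectors $\nabla_x s_i$. For the RBF kernel $\nabla_x s_i$ scales as $1/\sigma^2$ (for the squared-exponential kernel it equals $\frac{1}{\sigma^2}(x^{(i)} - x)$, so the Jacobian is exactly $\frac{1}{\sigma^2}\mathrm{Cov}_w(\eta, x)$). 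Bounding the operator norm by weighted Cauchy--Schwarz yields $\frac{1}{\sigma^2}$ times the product of the weighted standard deviations of $\{\eta^{(i)}\}$ and $\{x^{(i)}\}$; since both are fixed finite point sets, these spreads are bounded by their constant diameters, giving $\|\nabla_x f^*_\sigma\| = O(1/\sigma^2)$ uniformly in $x$.

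The main obstacle I anticipate is not the algebra but getting the orientation of the Bregman divergence correct: the optimization variable must occupy the base-point slot, which is precisely what produces the arithmetic mean of \emph{natural} parameters rather than of mean parameters. A secondary subtlety is arguing that the cross-covariance bound is genuinely uniform in $x$ and hides no factor growing with $\|x - x^{(i)}\|$ — the resolution being that the softmax weights $w_i$ suppress distant data points fast enough that the explicit $1/\sigma^2$ prefactor is the only $\sigma$-dependence surviving in the bound.
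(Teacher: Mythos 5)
Your proposal is correct, and it splits the same way the paper's proof does, but the two halves differ in how closely they track the paper. For the centroid claim your route is essentially the paper's: both identify the weighted KL objective with a weighted sum of Bregman divergences generated by the log-partition function $A$, with the optimization variable in the base-point slot, and conclude that the minimizer is the weighted arithmetic mean of the natural parameters. The paper simply cites Proposition~1 of Banerjee et al.\ for this step, whereas you derive it from the first-order condition and positive-definiteness of $\nabla^2 A$. One small imprecision: your closing appeal to ``convexity of $A$'' does not by itself certify global optimality, since a Bregman divergence is generally not convex in its second argument; the clean justification is that the weighted objective equals a constant plus $d_A(\bar\eta, \eta)$, which is nonnegative and vanishes only at $\eta = \bar\eta$ (this is exactly what the cited proposition packages). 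For the Lipschitz bound your argument is genuinely different from the paper's. The paper differentiates each normalized weight $w_\sigma(x, x^{(i)})$ by the quotient rule, bounds it elementwise by $\frac{1}{\sigma^2}\sum_j |x^{(i)} - x^{(j)}|$, and then chains induced matrix norms through $\nabla_x f^*_\sigma = M\,\nabla_x W_\sigma(x)^{\top}$; this is a cruder bound that picks up dimension factors like $\sqrt{mn}$. Your softmax-Jacobian computation instead exhibits $\nabla_x f^*_\sigma$ as $\frac{1}{\sigma^2}$ times a weighted cross-covariance between the $\eta^{(i)}$ and the data points, and weighted Cauchy--Schwarz then gives a bound that is uniform in $x$ with an explicit constant (the product of the weighted spreads of the two point clouds). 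That is tighter and more interpretable than the paper's bound, and your observation that the centered factors $x^{(i)} - \sum_k w_k x^{(k)}$ are controlled by the data diameter correctly disposes of the uniformity worry. You also hedge correctly on the kernel: the paper writes $K_\sigma(x,y) = \exp(-\|x-y\|/2\sigma^2)$ but differentiates it as the squared-exponential; either reading gives $\nabla_x \log K_\sigma = O(1/\sigma^2)$, so the conclusion is unaffected.
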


We wish to address \cref{thm:smooth}'s assumption that the true posteriors lie in the variational family. Note that for sufficiently large exponential families, this assumption is likely to hold. But even in the case where the variational family is Gaussian (a relatively small exponential family), the small approximation gap observed in \citep{cremer2018inference} suggests that it is plausible that posterior regularization would encourage the true posteriors to be approximately Gaussian.

Given that $\sigma$ modulates the smoothness of the inference model, it is natural to suspect that a larger choice of $\sigma$ results in a stronger regularization. To formalize this notion of regularization strength, we introduce a way to partially order a set of regularizers $\set{R_i(\theta)}$. 

\begin{definition}\label{def:strength}
Suppose two regularizers $R_1(\theta)$ and $R_2(\theta)$ share the same minimum $\min_\theta R_1(\theta) = \min_\theta R_2(\theta)$. We say that $R_1$ is a stronger regularizer than $R_2$ if 
$R_1(\theta) \ge R_2(\theta)$ for all $\theta \in \Theta$. 
\end{definition}

Note that any two regularizers can be modified via scalar addition to share the same minimum. Furthermore, if $R_1$ is stronger than $R_2$, then $R_1$ and $R_2$ share at least one minimizer. We now apply \cref{def:strength} to characterize the regularization strength of $R(\theta \scolon \sigma)$ as $\sigma$ increases.

\begin{definition}
We say that $\F$ is closed under input translation if $f \in \F \implies f_a \in \F$ for all $a \in \X$,  where $f_a(x) = f(x + a)$.
\end{definition}

\begin{restatable}{proposition}{PropStrength}\label{prop:strength}
Consider the denoising regularizer $R(\theta \scolon \sigma)$. Suppose $\F$ is closed under input translation and that, for any $\theta \in \Theta$, there exists $f \in \F$ such that $f(x)$ maps to the prior $p_\theta(z)$ all $x \in \X$. Furthermore, assume that there exists $\theta \in \Theta$ such that $p_\theta(x, z) = p_\theta(z)p_\theta(x)$. Then $R(\theta \scolon \sigma_1)$ is stronger $R(\theta \scolon \sigma_2)$ when $\sigma_1 \ge \sigma_2$; i.e., $\min_\theta R(\theta \scolon \sigma_1) = \min_\theta R(\theta \scolon \sigma_2) = 0$ and $R(\theta \scolon \sigma_1) \ge R(\theta \scolon \sigma_2)$ for all $\theta \in \Theta$.
\end{restatable}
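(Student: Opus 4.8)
The plan is to treat the two assertions separately: (i) that both regularizers share the minimum value $0$, which uses the independence and prior-mapping hypotheses, and (ii) the pointwise domination $R(\theta \scolon \sigma_1) \ge R(\theta \scolon \sigma_2)$, which uses only closure under input translation together with the infinite divisibility of Gaussian noise. Only (ii) is substantive; (i) is essentially a matter of exhibiting a witness.

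For (i), I would exhibit a single $\theta^\star$ at which $R(\theta^\star \scolon \sigma) = 0$ for every $\sigma$. By hypothesis there is a $\theta^\star$ with $p_{\theta^\star}(x,z) = p_{\theta^\star}(z)p_{\theta^\star}(x)$, whence $p_{\theta^\star}(z \giv x) = p_{\theta^\star}(z)$ for all $x$. The prior-mapping hypothesis supplies an $f \in \F$ that sends every input to $p_{\theta^\star}(z)$; for this $f$ we have $f(x + \veps) = p_{\theta^\star}(z) = p_{\theta^\star}(z \giv x)$, so every KL term vanishes and the inner expectation is $0$ regardless of $\sigma$. Since each KL divergence is nonnegative, $R(\theta \scolon \sigma) \ge 0$ for all $\theta$, and therefore $\min_\theta R(\theta \scolon \sigma_1) = \min_\theta R(\theta \scolon \sigma_2) = 0$.

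For (ii), fix $\theta$ and an arbitrary $f \in \F$. Because $\sigma_1 \ge \sigma_2 \ge 0$, I would write a draw from the larger noise as a sum of independent draws, $\veps_1 = \veps_2 + \veps_\delta$ with $\veps_2 \sim \Normal(\0, \sigma_2 \I)$ and $\veps_\delta \sim \Normal(\0, (\sigma_1 - \sigma_2)\I)$. Conditioning on $\veps_\delta = a$ and writing $f_a(x) = f(x+a)$, the identity $f(x + \veps_2 + a) = f_a(x + \veps_2)$ together with closure under input translation (so $f_a \in \F$) gives
\begin{align}
\Expect_{\phat(x)}\Expect_{\veps_2}\KL{f_a(x+\veps_2)}{p_\theta(z \giv x)} \ge R(\theta \scolon \sigma_2).
\end{align}
Averaging this bound over $\veps_\delta$ shows that the $\sigma_1$-objective evaluated at $f$ is at least $R(\theta \scolon \sigma_2)$; since this holds for every $f \in \F$, taking the infimum over $f$ yields $R(\theta \scolon \sigma_1) \ge R(\theta \scolon \sigma_2)$. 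Combining (i) and (ii) matches \cref{def:strength} and completes the proof.

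The step I expect to be the crux is the translation argument in (ii): one must apply it to an \emph{arbitrary} feasible $f$ for the $\sigma_1$ problem and recognize that each shifted copy $f_a$ is merely feasible\textemdash not optimal\textemdash for the $\sigma_2$ problem, which is exactly what points the inequality in the correct direction. Closure under input translation is essential here, since it is what allows the extra noise component $\veps_\delta$ to be absorbed into the inference function rather than left on the data; without it, the decomposed term could not be identified with a member of $\F$ and the monotonicity argument would break. A minor technical point is justifying the interchange of expectations via Fubini and confirming that the Gaussian decomposition is valid under either the variance or standard-deviation reading of $\Normal(\0, \sigma\I)$, both of which are routine.
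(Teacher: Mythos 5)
Your proof is correct and follows essentially the same route as the paper's: Part (i) exhibits the independent $\theta^\star$ with a constant-to-prior $f$ as the zero-achieving witness, and Part (ii) decomposes $\veps_1 = \veps_2 + \veps_\delta$ and uses closure under translation; your "each $f_a$ is feasible but not optimal for the $\sigma_2$ problem" observation is exactly the paper's interchange $\min_f \Expect_{\veps_\delta}[\cdot] \ge \Expect_{\veps_\delta}\min_f[\cdot]$ made explicit. No gaps.
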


% \iffalse
% \begin{proof}
% Assume that there exists $\theta$ such that $p_\theta(z \giv x) = p_\theta(z)$; in other words, that the mutual information under the generative model is zero $I_\theta(X, Z) = 0$. Furthermore, assume that there exists $f \in \F$ such that $f: x \mapsto p_\theta(z)$. Then $\min_\theta R(\theta\scolon \sigma) = 0$ for any choice of $\sigma$. To show that $\sigma_1 \ge \sigma_2 \implies R(\theta\scolon \sigma_1) \ge R(\theta\scolon \sigma_2)$, let $\veps_2 \sim \Normal(\0, \sigma_2\I)$, and $\veps_\delta = \Normal(\0, (\sigma_1 - \sigma_2)\I)$. Then
% \begin{align}
% R(\theta\scolon \sigma_1) &= \min_{f \in \F} \Expect_{\veps_\delta}\Expect_{\veps_2}\Expect_{\phat(x)}\brac{\KL{f(x + \veps_\delta + \veps_2)}{p_\theta(z \giv x)}}\\
% &\ge \Expect_{\veps_\delta}\min_{f \in \F} \Expect_{\veps_2}\Expect_{\phat(x)}\brac{\KL{f(x + \veps_\delta + \veps_2)}{p_\theta(z \giv x)}}.
% \end{align}
% Suppose $\F$ is closed under input translation; in other words, let $f_a(x) = f(x + a)$. $\F$ is closed under input translation if $f \in \F \implies f_a \in \F$ for all $a$. Then
% \begin{align}
% \Expect_{\veps_\delta}\min_{f \in \F} \Expect_{\veps_2}\Expect_{\phat(x)}\brac{\KL{f(x + \veps_\delta + \veps_2)}{p_\theta(z \giv x)}} = R(\theta \scolon \veps_2).
% \end{align}
% It thus follows that $R(\theta \scolon \sigma_1) \ge R(\theta \scolon \sigma_2)$ for all $\theta \in \Theta$.
% \end{proof}
% \fi

\cref{prop:kernel,prop:strength} show that as we increase $\sigma$, the optimal inference model is forced to become smoother and the regularization strength increases. \Cref{fig:strength_ablation_iw1} is consistent with this analysis, showing the progression from under-regularized to over-regularized models as we increase $\sigma$.

It is worth noting that, in addition to adjusting the denoising regularizer strength via $\sigma$, it is also possible to adjust the strength by taking a convex combination of the VAE and DVAE objectives. In particular, we can define the \emph{partially} denoising regularizer $R(\theta\scolon \sigma, \alpha)$ as
\begin{align}
\min_{f \in \F(\Q)}\Expect_{\phat(x)}
\parenc{
\alpha \cdot \Expect_{\veps}\brac{\KL{f(x + \veps)}{p_\theta(z \giv x)}}
+
(1 - \alpha) \cdot {\KL{f(x)}{p_\theta(z \giv x)}}
}
\end{align}
Importantly, we note that $R(\theta \scolon \sigma, \alpha)$ is still strictly non-negative and, when combined with the log-likelihood term, still yields a tractable variational lower bound.

% Cannot change weighting of R so we change perturbation
% Point out there are mistakes in DVAE paper. 

\subsection{Weight-Normalized Amortized Inference}

In addition to DVAE, we propose an alternative method that directly restricts $\F$ to the set of smooth functions. To do so, we consider the case where the inference model is a neural network encoder parameterized by weight matrices $\seta{W_i}$ and leverage \cite{salimans2016weight}'s weight normalization technique, which proposes to reparameterize the columns $w_i$ of each weight matrix $W$ as
\begin{align}
w_i = \frac{v_i}{\| v_i \|} \cdot s_i,
\end{align}
where $v_i \in \R^d, s_i \in \R$ are trainable parameters. Since it is possible to modulate the smoothness of the encoder by capping the magnitude of $s_i$, we introduce a new parameter $u_i \in \R$ and define
\begin{align}
s_i = \min\set{\|v_i \|, \paren{\frac{H}{1 + \exp(-u_i)}}}. 
\end{align}
The norm $\| w_i \|$ is thus bounded by the hyperparameter $H$. We denote the weight-normalized regularizer as $R(\theta \scolon \F_H)$, where $\F_H$ is the amortization family induced by a $H$-weight-normalized encoder. Under similar assumptions as \cref{prop:strength}, it is easy to see that $\min_\theta R(\theta \scolon \F_H) = 0$ for any $H \ge 0$ and that $R(\theta \scolon \F_{H_1}) \ge R(\theta \scolon \F_{H_2})$ for all $\theta \in \Theta$ when $H_1 \le H_2$ (since $\F_{H_1}\subseteq \F_{H_2}$). We refer to the resulting model as the weight-normalized inference VAE (WNI-VAE) and show in \Cref{table:vae} that weight-normalized amortized inference can achieve similar performance as DVAE.

\subsection{Experiments}

We conducted experiments on statically binarized MNIST, statically binarized OMNIGLOT, and the Caltech 101 Silhouettes datasets. These datasets have a relatively small amount of training data and are thus susceptible to model overfitting. For each dataset, we used the same decoder architecture across all four models (VAE, DVAE $(\alpha=0.5)$, DVAE $(\alpha=1.0)$, WNI-VAE) and only modified the encoder, and trained all models using Adam \cite{kingma2014adam} (see \cref{app:architecture} for more details). To approximate the log-likelihood, we proposed to use importance-weighted stochastic variational inference (IW-SVI), an extension of SVI \cite{hoffman2013stochastic} which we describe in detail in \cref{app:iwsvi}. Hyperparameter tuning of DVAE's $\sigma$ and WNI-VAE's $\F_H$ is described in \cref{table:strengths}.

\begin{table}[!h]
\setlength\tabcolsep{3pt} % default value: 6pt
\tiny
\centering
\caption{Test set evaluation of VAE, DVAE, and WNI-VAE. The performance metrics are log-likelihood $\ln p_\theta(x)$, the amortized ELBO $\L(x)$, and the inference gap $\Delta_\infer = \ln p_\theta(x) - \L(x)$. All three proposed models out-perform VAE across most metrics.} \label{table:vae}
\begin{tabular}{l|ccc|ccc|ccc}
\toprule
& \multicolumn{3}{c|}{MNIST} & \multicolumn{3}{c|}{OMNIGLOT} & \multicolumn{3}{c}{CALTECH} \\
& $-\ln p_\theta(x)$ & $\Delta_\infer$ & $-\L(x)$
& $-\ln p_\theta(x)$ & $\Delta_\infer$ & $-\L(x)$
& $-\ln p_\theta(x)$ & $\Delta_\infer$ & $-\L(x)$\\
\midrule
VAE               & 
$86.93$ \scalebox{0.5}{$\pm 0.04$} & $8.54$ \scalebox{0.5}{$\pm 0.14$} & $95.48$ \scalebox{0.5}{$\pm 0.07$} & 
$110.32$ \scalebox{0.5}{$\pm 0.16$} & $12.03$ \scalebox{0.5}{$\pm 0.25$} & $122.35$ \scalebox{0.5}{$\pm 0.33$} & 
$109.14$ \scalebox{0.5}{$\pm 0.28$} & $28.90$ \scalebox{0.5}{$\pm 0.42$} & $138.05$ \scalebox{0.5}{$\pm 0.15$} \\
DVAE $(\alpha=0.5)$ & 
$86.46$ \scalebox{0.5}{$\pm 0.02$} & $\bf6.34$ \scalebox{0.5}{$\pm 0.05$} & $\bf92.80$ \scalebox{0.5}{$\pm 0.07$} & 
$109.31$ \scalebox{0.5}{$\pm 0.19$} & $12.56$ \scalebox{0.5}{$\pm 0.18$} & $121.87$ \scalebox{0.5}{$\pm 0.37$} & 
$\bf108.64$ \scalebox{0.5}{$\pm 0.19$} & $\bf23.40$ \scalebox{0.5}{$\pm 0.19$} & $\bf132.04$ \scalebox{0.5}{$\pm 0.37$} \\
DVAE $(\alpha=1.0)$ & 
$86.51$ \scalebox{0.5}{$\pm 0.02$} & $6.83$ \scalebox{0.5}{$\pm 0.04$} & $93.35$ \scalebox{0.5}{$\pm 0.06$} & 
$110.12$ \scalebox{0.5}{$\pm 0.18$} & $12.44$ \scalebox{0.5}{$\pm 0.16$} & $122.56$ \scalebox{0.5}{$\pm 0.34$} & 
$108.66$ \scalebox{0.5}{$\pm 0.23$} & $23.94$ \scalebox{0.5}{$\pm 0.15$} & $132.60$ \scalebox{0.5}{$\pm 0.15$} \\
WNI-VAE           & 
$\bf86.42$ \scalebox{0.5}{$\pm 0.01$} & $6.68$ \scalebox{0.5}{$\pm 0.01$} & $93.10$ \scalebox{0.5}{$\pm 0.02$} & 
$\bf109.16$ \scalebox{0.5}{$\pm 0.12$} & $\bf11.39$ \scalebox{0.5}{$\pm 0.10$} & $\bf120.55$ \scalebox{0.5}{$\pm 0.20$} 
& $108.94$ \scalebox{0.5}{$\pm 0.31$} & $28.88$ \scalebox{0.5}{$\pm 0.29$} & $137.82$ \scalebox{0.5}{$\pm 0.25$} \\
\midrule
\end{tabular}
\end{table}
\begin{figure}[h]
\centering
\includegraphics[width=\textwidth]{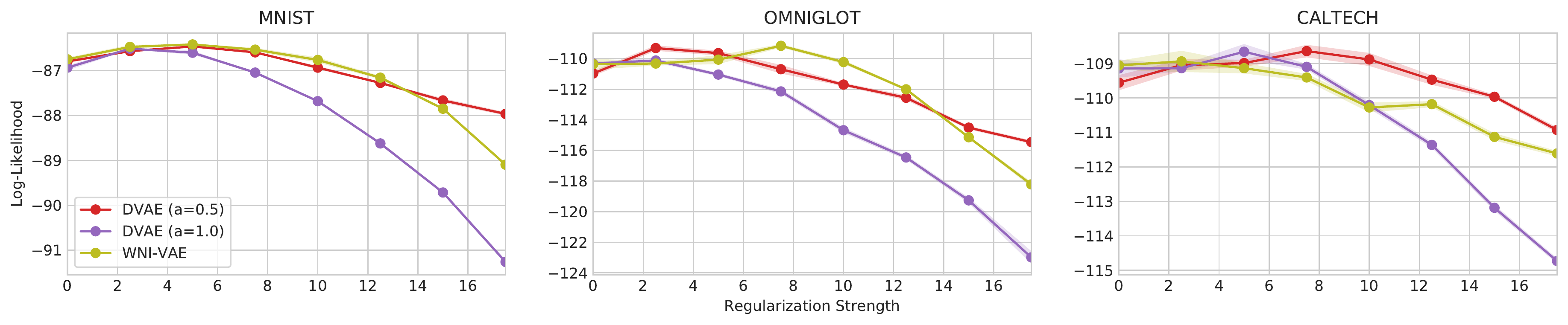}
\caption{Evaluation of the log-likelihood performance of all three proposed models as we vary the regularization parameter value. The regularization parameter is defined in \cref{table:strengths}. When the parameter value is too small, the model overfits and the test set performance degrades. When the parameter value is too high, the model underfits.
}
\label{fig:strength_ablation_iw1}
\end{figure}

\Cref{table:vae} shows the performance of VAE, DVAE, and WNI-VAE. Regularizing the inference model consistently improved the test set log-likelihood performance. On the MNIST and Caltech 101 Silhouettes datasets, the results also show a consistent reduction of the test set inference gap when the inference model is regularized. We observed differences in the performance of DVAE versus WNI-VAE on the Caltech 101 Silhouettes dataset, suggesting a difference in how denoising and weight normalization regularizes the inference model; an interesting consideration would thus be to combine DVAE and WNI. As a whole, \Cref{table:vae} demonstrates that AIR benefits the generative model.

The denoising and weight normalization regularizers have respective hyperparameters $\sigma$ and $H$ that control the regularization strength. In \Cref{fig:strength_ablation_iw1}, we performed an ablation analysis of how adjusting the regularization strength impacts the test set log-likelihood. In almost all cases, we see a transition from overfitting to underfitting as we adjust the strength of AIR.
%\mk{I wish we could do more showing than telling. maybe in appendix?}
For well-chosen regularization strength, however, it is possible to increase the test set log-likelihood performance by $0.5 \sim 1.0$ nats\textemdash a non-trivial improvement.

% \subsubsection{Significance of Joint Training}

% % Two kinds:

% % 1. Large encoder
% % 2. Semi-parametric VAE (introduce a delta-layer)

% Does inference improve if we simply re-trained the inference model in a regularized manner? Prediction: No! It's important that the generative model is regularized to be simple enough for the simple inference model.

% We might also find that re-training the inference model from scratch leads to a better model, since inference model overfitting and generative model overfitting might not match up.

%%%%%%%%%%%%%%%%%%%%%%%%%%%%%%%%%%%%%
% GENERATIVE MODEL ANALYSIS SECTION
%%%%%%%%%%%%%%%%%%%%%%%%%%%%%%%%%%%%%
\subsection{How Does Amortized Inference Regularization Affect the Generator?}

\Cref{table:vae} shows that regularizing the inference model empirically benefits the generative model. We now provide some initial theoretical characterization of how a smoothed amortized inference model affects the generative model. Our analysis rests on the following proposition.

\begin{restatable}{proposition}{PropDecoder}\label{prop:decoder}
Let $\P$ be an exponential family with corresponding mean parameter space $\M$ and sufficient statistic function $T(\cdot)$. With a slight abuse of notation, consider $g \in \G: \Z \to \M$. Define $q(x, z) = \phat(x)q(z \giv x)$, where $q(z \giv x)$ is a fixed inference model. Supposing $\G$ has infinite capacity, then the optimal generative model in \cref{eq:vae} returns $g^*(z) = \mu \in \M$, where
\begin{align}
\mu = \sum_{i=1}^n q(x^{(i)} \giv z) \cdot T(x^{(i)}) = \sum_{i=1}^n \paren{\frac{q(z \giv x^{(i)})}{\sum_j q(z \giv x^{(j)})} \cdot T(x^{(i)})}.
\end{align}
\end{restatable}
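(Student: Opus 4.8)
The plan is to exploit the fact that, with the inference model $q(z\giv x)$ held fixed, the only term in the amortized ELBO of \cref{eq:vae} that depends on the generative model $g$ is the expected conditional log-likelihood. Writing $q(x,z)=\phat(x)q(z\giv x)$ and using $g(z)(x)=p_\theta(x\giv z)$, the $g$-dependent part of the objective is $\Expect_{q(x,z)}\brac{\ln g(z)(x)}$, so the optimization collapses to $\max_{g\in\G}\Expect_{q(x,z)}\brac{\ln g(z)(x)}$. Everything else in \cref{eq:vae} (the prior term and the entropy of $f$) is constant in $g$ and can be discarded.

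First I would use the infinite-capacity assumption on $\G$ to decouple this maximization pointwise in $z$. Since $g$ may be any measurable map $\Z\to\M$, for each $z$ with positive marginal density $q(z)=\sum_i q(x^{(i)},z)$ we may choose the output $g(z)=\mu$ independently so as to maximize $\sum_i q(x^{(i)},z)\ln p_\mu(x^{(i)})$. Dividing by the $z$-dependent constant $q(z)$ leaves the argmax unchanged and replaces the weights $q(x^{(i)},z)$ by the normalized responsibilities $q(x^{(i)}\giv z)=q(z\giv x^{(i)})/\sum_j q(z\giv x^{(j)})$, where the uniform factor $\phat(x^{(i)})=1/n$ cancels. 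The problem thus reduces, for each $z$, to a weighted maximum-likelihood fit of a single exponential-family distribution to $\seta{x^{(i)}}$ with weights $\seta{q(x^{(i)}\giv z)}$.

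Next I would solve this weighted MLE with standard exponential-family calculus. Parameterizing $\P$ by its natural parameter $\eta$ with log-partition $A(\eta)$, so $p_\eta(x)\propto h(x)\exp(\langle\eta,T(x)\rangle)$ and the mean-parameter duality $\mu=\nabla A(\eta)=\Expect_{p_\eta}\brac{T(x)}$ holds, the weighted log-likelihood $\sum_i q(x^{(i)}\giv z)\brac{\langle\eta,T(x^{(i)})\rangle - A(\eta)}$ is concave in $\eta$ because $A$ is convex. Setting its gradient to zero yields the mean-matching stationarity condition $\nabla A(\eta)=\sum_i q(x^{(i)}\giv z)\,T(x^{(i)})$, i.e. $\mu=\sum_i q(x^{(i)}\giv z)\,T(x^{(i)})$, which is precisely the claimed optimum; concavity guarantees this stationary point is the global maximizer. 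Translating back through the bijection $\eta\mapsto\mu$ (valid on the interior of $\M$ for a minimal exponential family) gives $g^*(z)=\mu$ as stated.

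The main obstacle I anticipate is not the algebra but justifying the pointwise decoupling together with realizability of the optimum: I must argue that the convex combination $\sum_i q(x^{(i)}\giv z)\,T(x^{(i)})$ actually lies in $\M$, so that some $\mu\in\M$ attains it, and that ignoring $z$ outside the support of $q(z)$ loses nothing. The former follows from the convexity of $\M$ together with each $T(x^{(i)})$ lying in its closure, while the latter holds because $g(z)$ is unconstrained off the support of $q(z)$ and so never enters the objective.
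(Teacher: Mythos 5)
Your proposal is correct and follows essentially the same route as the paper's proof: isolate the reconstruction term $\Expect_{q(x,z)}[\ln g(z)(x)]$, use infinite capacity of $\G$ to decouple the maximization pointwise in $z$, and solve the resulting weighted maximum-likelihood problem for an exponential family via moment matching. The only difference is that you derive the moment-matching condition explicitly from the first-order conditions on the concave log-likelihood in natural parameters (and add a realizability remark about $\M$), whereas the paper simply invokes the equivalence of maximum likelihood and maximum entropy for exponential families.
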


\cref{prop:decoder} generalizes the analysis in \cite{bousquet2017optimal} which determined the optimal generative model when $\P$ is Gaussian. The key observation is that the optimal generative model outputs a convex combination of $\seta{\phi(x^{(i)})}$, weighted by $q(x^{(i)} \giv z)$. Furthermore, the weights $q(x^{(i)} \giv z)$ are simply density ratios of the proposal distributions $\seta{q(z \giv x^{(i)})}$. As we increase the smoothness of the amortized inference model, the weight $q(x^{(i)} \giv z)$ should tend toward $\frac{1}{n}$ for all $z \in \Z$. This suggests that a smoothed inference model provides a natural way to smooth (and thus regularize) the generative model. \todo{maybe prove this}

\section{Amortized Inference Regularization in Importance-Weighted Autoencoders}
In this section, we extend AIR to importance-weighted autoencoders (IWAE-$k$). Although the application is straightforward, we demonstrate a noteworthy relationship between the number of importance samples $k$ and the effect of AIR. To begin our analysis, we consider the IWAE-$k$ objective
\begin{align}
\max_{\theta,\phi} \Expect_{z_1\ldots z_k \sim q_\phi(z \giv x)} \brac{\ln \frac{1}{k} \sum_{i=1}^k \frac{p_\theta(x, z_i)}{q_\phi(z_i \giv x)}},
\end{align}
where $\seta{z_1\ldots z_k}$ are $k$ samples from the proposal distribution $q_\phi(z \giv x)$ to be used as importance-samples. Analysis by \cite{cremer2017reinterpreting} allows us to rewrite it as a regularized maximum likelihood objective
\begin{align}
\max_\theta \Expect_{\phat(x)}\brac{\ln p_\theta(x)} - 
\overbrace{
\min_{f \in \F(\Q)}
\Expect_{\phat(x)}
\Expect_{z_2\ldots z_k \sim f(x)} 
\UKL{\tilde{f}_k(x, z_1\ldots z_k)}{p_\theta(z \giv x)}}^{R_k(\theta)},
\end{align}
where $\tilde{f}_k$ (or equivalently $\tilde{q}_k$) is the unnormalized distribution 
\begin{align}
\tilde{f}_k(x, z_2\ldots z_k)(z_1) 
= \frac{p_\theta(x, z_1)}{\frac{1}{k} \sum_i \frac{p_\theta(x, z_i)}{f(x)(z_i)}}
=\tilde{q}_k(z_1 \giv x, z_2\ldots z_k) 
\end{align}
and $\UKL{q}{p} = \int q(z) \brac{\ln q(z) - \ln p(z)} dz$ is the Kullback-Leibler divergence extended to unnormalized distributions. For notational simplicity, we omit the dependency of $\tilde{f}_k$ on $(z_2\ldots z_k)$. Importantly, \cite{cremer2017reinterpreting} showed that the IWAE with $k$ importance samples drawn from the amortized inference model $f$ is, on expectation, equivalent to a VAE with $1$ importance sample drawn from the more expressive inference model $\tilde{f}_k$.

\subsection{Importance Sampling Attenuates Amortized Inference Regularization}

We now consider the interaction between importance sampling and AIR. We introduce the regularizer $R_k(\theta \scolon \sigma, \F_H)$ as follows
\begin{align}
R_k(\theta \scolon \sigma, \F_H) = 
\min_{f \in \F_H(\Q)}
\Expect_{\phat(x)}
\Expect_{\veps}\Expect_{z_2\ldots z_k \sim f(x + \veps)} 
\UKL{\tilde{f}_k(x + \veps)}{p_\theta(z \giv x)},
\end{align}
which corresponds to a regularizer where weight normalization, denoising, and importance sampling are simultaneously applied. By adapting Theorem 1 from \cite{burda2015importance}, we can show that

\begin{restatable}{proposition}{PropIWAE}\label{prop:iwae}
Consider the regularizer $R_k(\theta \scolon \sigma, \F_H)$. Under similar assumptions as \cref{prop:strength}, then $R_{k_1}$ is stronger than $R_{k_2}$ when $k_1 \le k_2$; i.e., $\min_\theta R_{k_1}(\theta \scolon \sigma, \F_H) = \min_\theta R_{k_2}(\theta \scolon \sigma, \F_H) = 0$ and $R_{k_1}(\theta \scolon \sigma, \F_H) \le R_{k_2}(\theta \scolon \sigma, \F_H)$ for all $\theta \in \Theta$.
%\rs{Can we convert this into a bound on the convergence rate? PROOF TODO? Maybe?}
\end{restatable}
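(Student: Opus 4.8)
The plan is to prove the two assertions of \cref{prop:iwae} separately: first that $\min_\theta R_k(\theta \scolon \sigma, \F_H) = 0$ for every $k$ (so that the $\set{R_k}$ share a common minimum and \cref{def:strength} applies), and then that the family is monotone in $k$, with smaller $k$ giving the stronger (pointwise larger) regularizer. The ordering is the substantive part, and I would obtain it by reducing the claim, for each fixed $f \in \F_H$, to the monotonicity of the $k$-sample importance-weighted objective in $k$ (Theorem~1 of \cite{burda2015importance}), and then passing the minimization over $f$ through the resulting pointwise inequality.

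For the shared minimum I would reuse the construction underlying \cref{prop:strength}. By assumption there is a $\theta$ with $p_\theta(x, z) = p_\theta(z)\,p_\theta(x)$, and $\F_H$ contains an $f$ sending every input to the prior $p_\theta(z)$. For this pair the weights $w_i = p_\theta(x, z_i)/f(x + \veps)(z_i) = p_\theta(x)$ are constant in $z_i$, so $\tilde f_k(x + \veps)$ collapses to $p_\theta(z_1) = p_\theta(z_1 \giv x)$ and $\UKL{\tilde f_k(x+\veps)}{p_\theta(z \giv x)} = 0$ for all $k$, $x$, and $\veps$; hence $R_k(\theta \scolon \sigma, \F_H) = 0$ at this $\theta$. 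Together with $R_k \ge 0$ — which holds because the inner expectation equals $\ln p_\theta(x)$ minus a $k$-sample importance-weighted bound that never exceeds $\ln p_\theta(x)$ by Jensen — this gives $\min_\theta R_k = 0$ for every $k$.

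For the ordering, the key step is to recognize, for fixed $\theta$, $f$, $x$, and $\veps$, that $\Expect_{z_2\ldots z_k \sim f(x+\veps)} \UKL{\tilde f_k(x+\veps)}{p_\theta(z \giv x)}$ equals $\ln p_\theta(x) - \Expect_{z_1 \ldots z_k \sim f(x+\veps)}\brac{\ln \tfrac{1}{k} \sum_i w_i}$, i.e.\ $\ln p_\theta(x)$ minus the $k$-sample importance-weighted objective built from the fixed proposal $f(x + \veps)$ with weights $w_i = p_\theta(x, z_i)/f(x + \veps)(z_i)$ (this is the reinterpretation of \cite{cremer2017reinterpreting}). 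I would then adapt \cite{burda2015importance}'s Theorem~1: since the $z_i$ are i.i.d.\ draws from the fixed proposal and $\Expect_{z \sim f(x+\veps)}[w] = p_\theta(x) < \infty$, the subsampling/Jensen argument shows this objective is non-decreasing in $k$, so the inner expectation is non-increasing in $k$. Averaging over $\phat(x)$ and $\veps$ preserves the inequality, giving for each fixed $f$ a functional $\Phi_k(f)$ with $\Phi_{k_1}(f) \ge \Phi_{k_2}(f)$ whenever $k_1 \le k_2$. Finally I would pass to the minimum: since $\Phi_{k_1}(f) \ge \Phi_{k_2}(f) \ge \min_g \Phi_{k_2}(g)$ for every $f$, taking the infimum over $f$ yields $R_{k_1} = \min_f \Phi_{k_1}(f) \ge \min_g \Phi_{k_2}(g) = R_{k_2}$. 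By \cref{def:strength} this is exactly the assertion that $R_{k_1}$ is the stronger regularizer when $k_1 \le k_2$, so the displayed comparison is established in the direction consistent with the ``stronger'' terminology.

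The main obstacle I anticipate is verifying that Burda's monotonicity genuinely transfers to the denoising and weight-normalized regularizer, where the proposal $f(x + \veps)$ is evaluated at the perturbed input while the importance weights and the KL target are anchored at the clean $x$. The point to check is that, conditional on $\veps$, the $w_i$ are i.i.d.\ functions of i.i.d.\ proposal samples with finite mean $\Expect_{z \sim f(x+\veps)}[p_\theta(x,z)/f(x+\veps)(z)] = p_\theta(x)$; once this is confirmed the combinatorial argument of \cite{burda2015importance} applies verbatim, and the $\min$-domination and non-negativity steps are routine. The restriction to $\F_H$ changes nothing here, since the monotonicity holds pointwise in $f$ before the minimization is taken.
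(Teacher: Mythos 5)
Your proposal is correct and takes essentially the same route as the paper's proof: Part~1 establishes the common zero minimum via the independent $\theta$ and the prior-mapping $f \in \F_H$ (you additionally spell out the collapse of $\tilde{f}_k$ and the Jensen-based non-negativity, which the paper leaves as ``it follows readily''), and Part~2 applies the subsampling/Jensen argument of \cite{burda2015importance}'s Theorem~1 pointwise in $f$ and then passes the inequality through the optimization over $\F_H$ — the paper phrases this as $\L_k \ge \L_m$ with $\L_k = \Expect_{\phat(x)} \ln p_\theta(x) - R_k$ and a max over $f$, while you work with the min on $R_k$ directly, which is the same step. One remark: you correctly derive $R_{k_1} \ge R_{k_2}$ for $k_1 \le k_2$, which is what \cref{def:strength} and the paper's own proof require; the inequality $R_{k_1} \le R_{k_2}$ displayed in the proposition statement is evidently a typo, and your handling of the direction is the right one.
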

\todo{maybe show the asymptotic convergence rate}

A notable consequence of \cref{prop:iwae} is that as $k$ increases, AIR exhibits a weaker regularizing effect on the posterior distributions $\seta{p_\theta(z \giv x^{(i)})}$. Intuitively, this arises from the phenomenon that although AIR is applied to $f$, the subsequent importance-weighting procedure can still create a flexible $\tilde{f}_k$. Our analysis thus predicts that AIR is less likely to cause \emph{underfitting} of IWAE-$k$'s generative model as $k$ increases, which we demonstrate in \Cref{fig:strength_ablation}. In the limit of infinite importance samples, we also predict AIR to have zero regularizing effect since $\tilde{f}_\infty$ (under some assumptions) can always approximate any posterior. However, for practically feasible values of $k$, we show in \cref{table:iwae8,table:iwae64} that AIR is a highly effective regularizer.

\subsection{Experiments}
\begin{table}[!h]
\setlength\tabcolsep{3pt} % default value: 6pt
\tiny
\centering
\caption{Test set evaluation of the four models when trained with $8$ importance samples. $\L_8(x)$ denotes the amortized ELBO using $8$ importance samples. $\Delta_\infer = \ln p_\theta(x) - \L_8(x)$.} 
\label{table:iwae8}
\begin{tabular}{l|ccc|ccc|ccc}
\toprule
& \multicolumn{3}{c|}{MNIST} & \multicolumn{3}{c|}{OMNIGLOT} & \multicolumn{3}{c}{CALTECH} \\
& $-\ln p_\theta(x)$ & $\Delta_\infer$ & $-\L_8(x)$
& $-\ln p_\theta(x)$ & $\Delta_\infer$ & $-\L_8(x)$
& $-\ln p_\theta(x)$ & $\Delta_\infer$ & $-\L_8(x)$\\
\midrule
IWAE &
$86.21$ \scalebox{0.5}{$\pm 0.01$} & $6.13$ \scalebox{0.5}{$\pm 0.03$} & $92.34$ \scalebox{0.5}{$\pm 0.02$} &
$108.18$ \scalebox{0.5}{$\pm 0.24$} & $8.69$ \scalebox{0.5}{$\pm 0.39$} & $116.87$ \scalebox{0.5}{$\pm 0.16$} &
$108.65$ \scalebox{0.5}{$\pm 0.11$} & $21.52$ \scalebox{0.5}{$\pm 0.13$} & $130.17$ \scalebox{0.5}{$\pm 0.09$}
\\
DIWAE $(\alpha=0.5)$&
$\bf85.78$ \scalebox{0.5}{$\pm 0.02$} & $4.47$ \scalebox{0.5}{$\pm 0.02$} & $90.25$ \scalebox{0.5}{$\pm 0.03$} &
$\bf107.01$ \scalebox{0.5}{$\pm 0.11$} & $8.64$ \scalebox{0.5}{$\pm 0.07$} & $\bf115.66$ \scalebox{0.5}{$\pm 0.17$} &
$\bf107.34$ \scalebox{0.5}{$\pm 0.17$} & $17.61$ \scalebox{0.5}{$\pm 0.18$} & $124.96$ \scalebox{0.5}{$\pm 0.14$}
\\
DIWAE $(\alpha=1.0)$& 
$\bf85.78$ \scalebox{0.5}{$\pm 0.03$} & $\bf4.21$ \scalebox{0.5}{$\pm 0.03$} & $\bf90.00$ \scalebox{0.5}{$\pm 0.06$} &
$107.47$ \scalebox{0.5}{$\pm 0.06$} & $\bf8.57$ \scalebox{0.5}{$\pm 0.14$} & $116.04$ \scalebox{0.5}{$\pm 0.18$} &
$107.54$ \scalebox{0.5}{$\pm 0.11$} & $\bf17.06$ \scalebox{0.5}{$\pm 0.35$} & $\bf124.60$ \scalebox{0.5}{$\pm 0.29$}
\\
WNI-IWAE &
$85.81$ \scalebox{0.5}{$\pm 0.01$} & $4.33$ \scalebox{0.5}{$\pm 0.03$} & $90.14$ \scalebox{0.5}{$\pm 0.04$} &
$107.15$ \scalebox{0.5}{$\pm 0.08$} & $8.78$ \scalebox{0.5}{$\pm 0.17$} & $115.93$ \scalebox{0.5}{$\pm 0.10$} &
$107.98$ \scalebox{0.5}{$\pm 0.19$} & $22.18$ \scalebox{0.5}{$\pm 0.33$} & $130.16$ \scalebox{0.5}{$\pm 0.14$}
\\
\midrule
\end{tabular}

\caption{Test set evaluation of the four models when trained with $64$ importance samples. $\Delta_\infer = \ln p_\theta(x) - \L_{64}(x)$.} 
\label{table:iwae64}
\begin{tabular}{l|ccc|ccc|ccc}
\toprule
& \multicolumn{3}{c|}{MNIST} & \multicolumn{3}{c|}{OMNIGLOT} & \multicolumn{3}{c}{CALTECH} \\
& $-\ln p_\theta(x)$ & $\Delta_\infer$ & $-\L_{64}(x)$
& $-\ln p_\theta(x)$ & $\Delta_\infer$ & $-\L_{64}(x)$
& $-\ln p_\theta(x)$ & $\Delta_\infer$ & $-\L_{64}(x)$\\
\midrule
IWAE & 
$86.06$ \scalebox{0.5}{$\pm 0.03$} & $4.41$ \scalebox{0.5}{$\pm 0.10$} & $90.48$ \scalebox{0.5}{$\pm 0.07$} &
$107.31$ \scalebox{0.5}{$\pm 0.14$} & $\bf6.66$ \scalebox{0.5}{$\pm 0.22$} & $113.97$ \scalebox{0.5}{$\pm 0.10$} &
$108.89$ \scalebox{0.5}{$\pm 0.35$} & $16.51$ \scalebox{0.5}{$\pm 0.32$} & $125.40$ \scalebox{0.5}{$\pm 0.25$}
\\
DIWAE $(\alpha=0.5)$ &
$\bf85.55$ \scalebox{0.5}{$\pm 0.02$} & $\bf3.01$ \scalebox{0.5}{$\pm 0.01$} & $\bf88.56$ \scalebox{0.5}{$\pm 0.02$} &
$\bf106.02$ \scalebox{0.5}{$\pm 0.01$} & $6.98$ \scalebox{0.5}{$\pm 0.06$} & $113.00$ \scalebox{0.5}{$\pm 0.07$} &
$\bf106.94$ \scalebox{0.5}{$\pm 0.11$} & $\bf12.28$ \scalebox{0.5}{$\pm 0.14$} & $\bf119.22$ \scalebox{0.5}{$\pm 0.11$}
\\
DIWAE $(\alpha=1.0)$& 
$\bf85.55$ \scalebox{0.5}{$\pm 0.02$} & $3.15$ \scalebox{0.5}{$\pm 0.02$} & $88.70$ \scalebox{0.5}{$\pm 0.04$} &
$106.15$ \scalebox{0.5}{$\pm 0.03$} & $6.70$ \scalebox{0.5}{$\pm 0.05$} & $\bf112.85$ \scalebox{0.5}{$\pm 0.07$} &
$106.96$ \scalebox{0.5}{$\pm 0.11$} & $12.94$ \scalebox{0.5}{$\pm 0.22$} & $119.87$ \scalebox{0.5}{$\pm 0.16$}
\\
WNI-IWAE &
$85.64$ \scalebox{0.5}{$\pm 0.03$} & $3.10$ \scalebox{0.5}{$\pm 0.01$} & $88.74$ \scalebox{0.5}{$\pm 0.03$} &
$106.17$ \scalebox{0.5}{$\pm 0.07$} & $7.11$ \scalebox{0.5}{$\pm 0.07$} & $113.28$ \scalebox{0.5}{$\pm 0.13$} &
$108.15$ \scalebox{0.5}{$\pm 0.11$} & $14.42$ \scalebox{0.5}{$\pm 0.20$} & $122.57$ \scalebox{0.5}{$\pm 0.10$}
\\
\midrule
\end{tabular}
\end{table}

\cref{table:iwae8,table:iwae64} extends the model evaluation to IWAE-$8$ and IWAE-$64$. We see that the denoising IWAE (DIWAE) and weight-normalized inference IWAE (WNI-IWAE) consistently out-perform the standard IWAE on test set log-likelihood evaluations. Furthermore, the regularized models frequently reduced the inference gap as well. Our results demonstrate that AIR is a highly effective regularizer even when a large number of importance samples are used.

Our main experimental contribution in this section is the verification that increasing the number of importance samples results in less underfitting when the inference model is over-regularized. In contrast to $k=1$, where aggressively increasing the regularization strength can cause considerable underfitting, \Cref{fig:strength_ablation} shows that increasing the number of importance samples to $k=8$ and $k=64$ makes the models much more robust to mis-specified choices of regularization strength. Interestingly, we also observed that the optimal regularization strength (determined using the validation set) increases with $k$ (see \cref{table:strengths} for details). The robustness of importance sampling when paired with amortized inference regularization makes AIR an effective and practical way to regularize IWAE.

\begin{figure}[h]
\centering
\includegraphics[width=.9\textwidth]{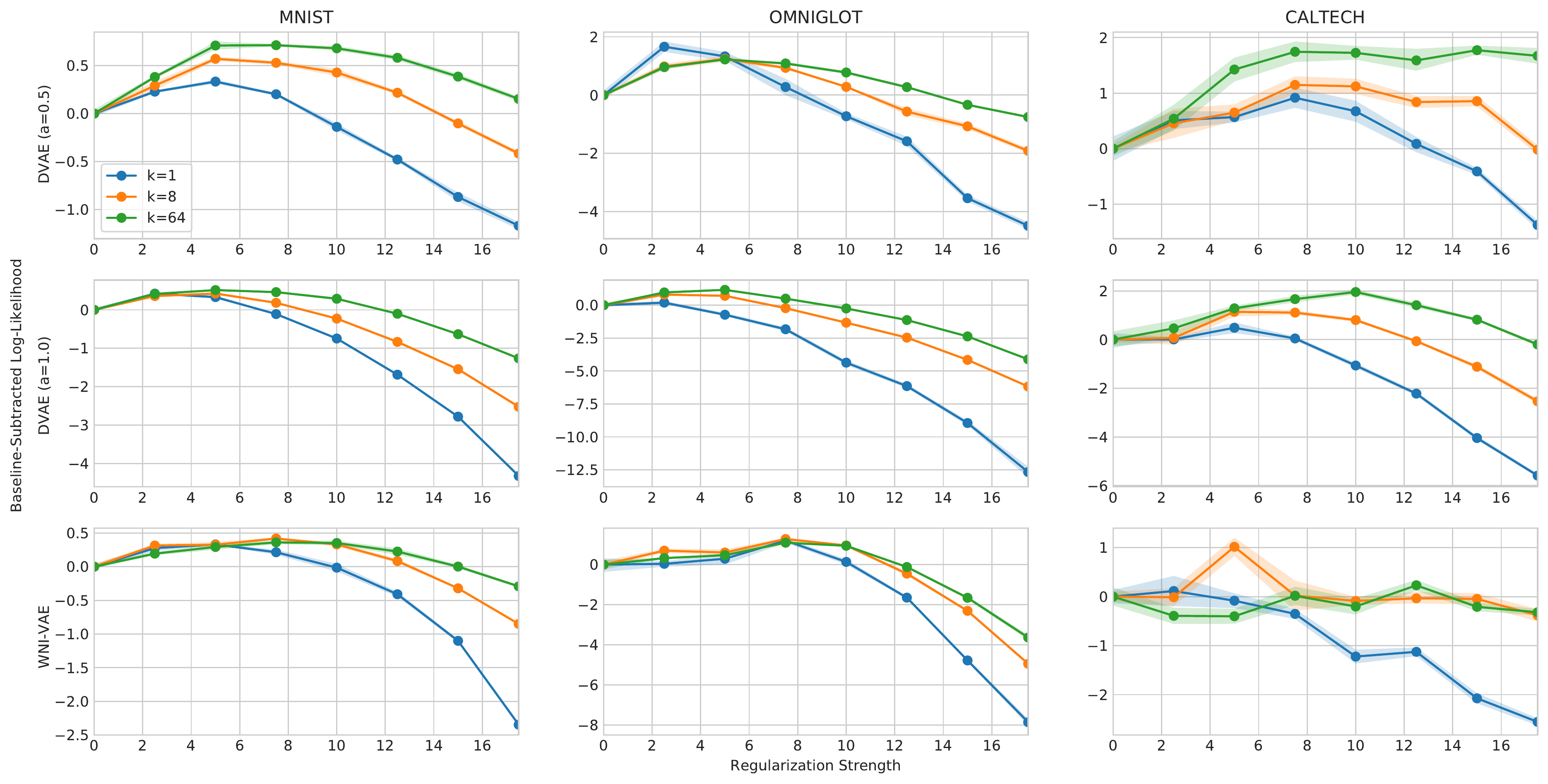}
\caption{Evaluation of the log-likelihood performance of all three proposed models as we vary the regularization parameter (see \cref{table:strengths} for definition) and number of importance samples $k$. To compare across different $k$'s, the performance without regularization (IWAE-$k$ baseline) is subtracted. We see that IWAE-$64$ is the least likely to underfit when the regularization parameter value is high.}
\label{fig:strength_ablation}
\end{figure}

\subsection{Are High Signal-to-Noise Ratio Gradients Necessarily Better?}

We note the existence of a related work \cite{rainforth2018tighter} that also concluded that approximating maximum likelihood training is not necessarily better. However, \cite{rainforth2018tighter} focused on increasing the signal-to-noise ratio of the gradient updates and analyzed the trade-off between importance sampling and Monte Carlo sampling under budgetary constraints. 
%In contrast, our paper demonstrates the desirability of regularizing maximum likelihood training\textemdash independent of computational budget considerations. 
An in-depth discussion of these two works within the context of generalization is provided in \cref{app:snr}.

\section{Conclusion}

In this paper, we challenged the conventional role that amortized inference plays in training deep generative models. In addition to expediting variational inference, amortized inference introduces new ways to regularize maximum likelihood training. We considered a special case of amortized inference regularization (AIR) where the inference model must learn a smoothed mapping from $\X \to \Q$ and showed that the denoising variational autoencoder (DVAE) and weight-normalized inference (WNI) are effective instantiations of AIR. Promising directions for future work include replacing denoising with adversarial training \cite{goodfellow2014explaining} and weight normalization with spectral normalization \cite{miyato2018spectral}. Furthermore, we demonstrated that AIR plays a crucial role in the regularization of IWAE, and that higher levels of regularization may be necessary due to the attenuating effects of importance sampling on AIR. We believe that variational family expansion by Monte Carlo methods \cite{hoffman2017learning} may exhibit the same attenuating effect on AIR and recommend this as an additional research direction.

\subsection*{Acknowledgements}
This research was supported by TRI, NSF (\#1651565, \#1522054,
\#1733686 ), ONR, Sony, and FLI. Toyota Research Institute provided funds to assist the authors with their research but this article solely reflects the opinions and conclusions of its authors and not TRI or any other Toyota entity.
%\rs{Remove reference to lipschitzness and add simple experiment for sample complexity. Mixture of Gaussians. Have a figure for inference smoothness. Have a figure for how IWAE-k changes things (subset $F_K$ but IWAE-k wiggles out of it).}

%\mk{Before submission, it would be good to clean up the refs. For example, ``Bayes'' is not capitalized; Capitalization of conferences are not consistent, names of journals are not consistent; middle initials sometimes have dots.}

\bibliography{main}

\begin{thebibliography}{10}

\bibitem{kingma2013auto}
Diederik~P Kingma and Max Welling.
\newblock {A}uto-{E}ncoding {V}ariational {B}ayes.
\newblock {\em arXiv preprint arXiv:1312.6114}, 2013.

\bibitem{kingma2014semi}
Diederik~P Kingma, Shakir Mohamed, Danilo~Jimenez Rezende, and Max Welling.
\newblock {S}emi-{S}upervised {L}earning {W}ith {D}eep {G}enerative {M}odels.
\newblock In {\em Advances In Neural Information Processing Systems}, pages
  3581--3589, 2014.

\bibitem{kim2018disentangling}
Hyunjik Kim and Andriy Mnih.
\newblock {D}isentangling {B}y {F}actorising.
\newblock {\em arXiv preprint arXiv:1802.05983}, 2018.

\bibitem{chen2018isolating}
Tian~Qi Chen, Xuechen Li, Roger Grosse, and David Duvenaud.
\newblock {I}solating {S}ources {O}f {D}isentanglement {I}n {V}ariational
  {A}utoencoders.
\newblock {\em arXiv preprint arXiv:1802.04942}, 2018.

\bibitem{kim2018semi}
Yoon Kim, Sam Wiseman, Andrew~C Miller, David Sontag, and Alexander~M Rush.
\newblock {S}emi-{A}mortized {V}ariational {A}utoencoders.
\newblock {\em arXiv preprint arXiv:1802.02550}, 2018.

\bibitem{kingma2016improved}
Diederik~P Kingma, Tim Salimans, Rafal Jozefowicz, Xi~Chen, Ilya Sutskever, and
  Max Welling.
\newblock {I}mproved {V}ariational {I}nference {W}ith {I}nverse
  {A}utoregressive {F}low.
\newblock In {\em Advances In Neural Information Processing Systems}, pages
  4743--4751, 2016.

\bibitem{sonderby2016ladder}
Casper~Kaae S{\o}nderby, Tapani Raiko, Lars Maal{\o}e, S{\o}ren~Kaae
  S{\o}nderby, and Ole Winther.
\newblock {L}adder {V}ariational {A}utoencoders.
\newblock In {\em Advances In Neural Information Processing Systems}, pages
  3738--3746, 2016.

\bibitem{burda2015importance}
Yuri Burda, Roger Grosse, and Ruslan Salakhutdinov.
\newblock {I}mportance {W}eighted {A}utoencoders.
\newblock {\em arXiv preprint arXiv:1509.00519}, 2015.

\bibitem{rezende2014stochastic}
Danilo~Jimenez Rezende, Shakir Mohamed, and Daan Wierstra.
\newblock {S}tochastic {B}ackpropagation {A}nd {A}pproximate {I}nference {I}n
  {D}eep {G}enerative {M}odels.
\newblock {\em arXiv preprint arXiv:1401.4082}, 2014.

\bibitem{cremer2018inference}
Chris Cremer, Xuechen Li, and David Duvenaud.
\newblock {I}nference {S}uboptimality {I}n {V}ariational {A}utoencoders.
\newblock {\em arXiv preprint arXiv:1801.03558}, 2018.

\bibitem{wu2016quantitative}
Yuhuai Wu, Yuri Burda, Ruslan Salakhutdinov, and Roger Grosse.
\newblock {O}n {T}he {Q}uantitative {A}nalysis {O}f {D}ecoder-{B}ased
  {G}enerative {M}odels.
\newblock {\em arXiv preprint arXiv:1611.04273}, 2016.

\bibitem{krishnan2017challenges}
Rahul~G Krishnan, Dawen Liang, and Matthew Hoffman.
\newblock On the challenges of learning with inference networks on sparse,
  high-dimensional data.
\newblock {\em arXiv preprint arXiv:1710.06085}, 2017.

\bibitem{maaloe2016auxiliary}
Lars Maal{\o}e, Casper~Kaae S{\o}nderby, S{\o}ren~Kaae S{\o}nderby, and Ole
  Winther.
\newblock {A}uxiliary {D}eep {G}enerative {M}odels.
\newblock {\em arXiv preprint arXiv:1602.05473}, 2016.

\bibitem{ranganath2016hierarchical}
Rajesh Ranganath, Dustin Tran, and David Blei.
\newblock {H}ierarchical {V}ariational {M}odels.
\newblock In {\em International Conference On Machine Learning}, pages
  324--333, 2016.

\bibitem{ganchev2010posterior}
Kuzman Ganchev, Jennifer Gillenwater, Ben Taskar, et~al.
\newblock {P}osterior {R}egularization {F}or {S}tructured {L}atent {V}ariable
  {M}odels.
\newblock {\em Journal of Machine Learning Research}, 11(Jul):2001--2049, 2010.

\bibitem{zhu2014bayesian}
Jun Zhu, Ning Chen, and Eric~P Xing.
\newblock {B}ayesian {I}nference {W}ith {P}osterior {R}egularization {A}nd
  {A}pplications {T}o {I}nfinite {L}atent {S}vms.
\newblock {\em The Journal of Machine Learning Research}, 15(1):1799--1847,
  2014.

\bibitem{im2017denoising}
Daniel~Jiwoong Im, Sungjin Ahn, Roland Memisevic, Yoshua Bengio, et~al.
\newblock {D}enoising {C}riterion {F}or {V}ariational {A}uto-{E}ncoding
  {F}ramework.
\newblock In {\em AAAI}, pages 2059--2065, 2017.

\bibitem{salimans2016weight}
Tim Salimans and Diederik~P Kingma.
\newblock {W}eight {N}ormalization: {A} {S}imple {R}eparameterization {T}o
  {A}ccelerate {T}raining {O}f {D}eep {N}eural {N}etworks.
\newblock In {\em Advances In Neural Information Processing Systems}, pages
  901--909, 2016.

\bibitem{kingma2014adam}
Diederik~P Kingma and Jimmy Ba.
\newblock {A}dam: {A} method {F}or {S}tochastic {O}ptimization.
\newblock {\em arXiv preprint arXiv:1412.6980}, 2014.

\bibitem{hoffman2013stochastic}
Matthew~D Hoffman, David~M Blei, Chong Wang, and John Paisley.
\newblock {S}tochastic {V}ariational {I}nference.
\newblock {\em The Journal of Machine Learning Research}, 14(1):1303--1347,
  2013.

\bibitem{bousquet2017optimal}
Olivier Bousquet, Sylvain Gelly, Ilya Tolstikhin, Carl-Johann Simon-Gabriel,
  and Bernhard Schoelkopf.
\newblock {F}rom {O}ptimal {T}ransport {T}o {G}enerative {M}odeling: {T}he
  {VEGAN} {C}ookbook.
\newblock {\em arXiv preprint arXiv:1705.07642}, 2017.

\bibitem{cremer2017reinterpreting}
Chris Cremer, Quaid Morris, and David Duvenaud.
\newblock {R}einterpreting {I}mportance-{W}eighted {A}utoencoders.
\newblock {\em arXiv preprint arXiv:1704.02916}, 2017.

\bibitem{rainforth2018tighter}
Tom Rainforth, Adam~R Kosiorek, Tuan~Anh Le, Chris~J Maddison, Maximilian Igl,
  Frank Wood, and Yee~Whye Teh.
\newblock {T}ighter {V}ariational {B}ounds {A}re {N}ot {N}ecessarily {B}etter.
\newblock {\em arXiv preprint arXiv:1802.04537}, 2018.

\bibitem{goodfellow2014explaining}
Ian~J Goodfellow, Jonathon Shlens, and Christian Szegedy.
\newblock {E}xplaining {A}nd {H}arnessing {A}dversarial {E}xamples.
\newblock {\em arXiv preprint arXiv:1412.6572}, 2014.

\bibitem{miyato2018spectral}
Takeru Miyato, Toshiki Kataoka, Masanori Koyama, and Yuichi Yoshida.
\newblock {S}pectral {N}ormalization {F}or {G}enerative {A}dversarial
  {N}etworks.
\newblock {\em arXiv preprint arXiv:1802.05957}, 2018.

\bibitem{hoffman2017learning}
Matthew~D Hoffman.
\newblock {L}earning {D}eep {L}atent {G}aussian {M}odels {W}ith {M}arkov
  {C}hain {M}onte {C}arlo.
\newblock In {\em International Conference On Machine Learning}, pages
  1510--1519, 2017.

\bibitem{li2016renyi}
Yingzhen Li and Richard~E Turner.
\newblock {R}{\'e}nyi {D}ivergence {V}ariational {I}nference.
\newblock In {\em Advances In Neural Information Processing Systems}, pages
  1073--1081, 2016.

\bibitem{tomczak2017vae}
Jakub~M Tomczak and Max Welling.
\newblock {VAE} {W}ith {A} {V}ampprior.
\newblock {\em arXiv preprint arXiv:1705.07120}, 2017.

\bibitem{smith2018sgd}
Samuel~L. Smith and Quoc~V. Le.
\newblock {A} bayesian {P}erspective {O}n {G}eneralization {A}nd {S}tochastic
  {G}radient {D}escent.
\newblock In {\em International Conference On Learning Representations}, 2018.

\bibitem{dinh2017sharp}
Laurent Dinh, Razvan Pascanu, Samy Bengio, and Yoshua Bengio.
\newblock {S}harp {M}inima {C}an {G}eneralize {F}or {D}eep {N}ets.
\newblock {\em arXiv preprint arXiv:1703.04933}, 2017.

\bibitem{masters2018revisiting}
Dominic Masters and Carlo Luschi.
\newblock {R}evisiting {S}mall {B}atch {T}raining {F}or {D}eep {N}eural
  {N}etworks.
\newblock {\em arXiv preprint arXiv:1804.07612}, 2018.

\bibitem{zhang2016understanding}
Chiyuan Zhang, Samy Bengio, Moritz Hardt, Benjamin Recht, and Oriol Vinyals.
\newblock {U}nderstanding {D}eep {L}earning {R}equires {R}ethinking
  {G}eneralization.
\newblock {\em arXiv preprint arXiv:1611.03530}, 2016.

\bibitem{banerjee2005clustering}
Arindam Banerjee, Srujana Merugu, Inderjit~S Dhillon, and Joydeep Ghosh.
\newblock {C}lustering {W}ith {B}regman {D}ivergences.
\newblock {\em Journal of machine learning research}, 6(Oct):1705--1749, 2005.

\end{thebibliography}
\bibliographystyle{unsrt}

\newpage
\appendix

\section{Overly Expressive Amortization Family Hurts Generalization}
\label{app:large_encoder}
In the experiments by \cite{cremer2018inference}, they observed that an overly expressive amortization family increases the test set inference gap, but does not impact the test set log-likelihood. We show in \cref{table:large_encoder} that \cite{cremer2018inference}'s observation is not true in general, and that an overly expressive amortization family can in fact hurt test set log-likelihood. Details regarding the architectures are provided in \cref{app:architecture}.

\begin{table}[!h]
\setlength\tabcolsep{3pt} % default value: 6pt
\tiny
\centering
\caption{Performance evaluation when an over-expressive amortization family is used (i.e. a larger encoder). Comparison is made against models that use a smaller encoder. The results show that using a large encoder consistently hurts generalization by over $1$ nat.} 
\label{table:large_encoder}
\begin{tabular}{l|ccc|ccc|ccc}
\toprule
& \multicolumn{3}{c|}{MNIST ($k = 1$)} & \multicolumn{3}{c|}{MNIST ($k = 8$)} & \multicolumn{3}{c}{MNIST ($k = 64$)} \\
& $-\ln p_\theta(x)$ & $\Delta_\infer$ & $-\L_1(x)$
& $-\ln p_\theta(x)$ & $\Delta_\infer$ & $-\L_8(x)$
& $-\ln p_\theta(x)$ & $\Delta_\infer$ & $-\L_64(x)$\\
\midrule
IWAE (Large Encoder) &
$87.43$ \scalebox{0.5}{$\pm 0.05$} & $11.32$ \scalebox{0.5}{$\pm 0.21$} & $98.74$ \scalebox{0.5}{$\pm 0.25$} &
$86.98$ \scalebox{0.5}{$\pm 0.07$} & $8.00$ \scalebox{0.5}{$\pm 0.18$} & $94.98$ \scalebox{0.5}{$\pm 0.17$} &
$86.70$ \scalebox{0.5}{$\pm 0.06$} & $5.91$ \scalebox{0.5}{$\pm 0.11$} & $92.61$ \scalebox{0.5}{$\pm 0.10$} 
\\
IWAE &
$86.93$ \scalebox{0.5}{$\pm 0.04$} & $8.54$ \scalebox{0.5}{$\pm 0.14$} & $95.48$ \scalebox{0.5}{$\pm 0.07$} & 
$86.21$ \scalebox{0.5}{$\pm 0.01$} & $6.13$ \scalebox{0.5}{$\pm 0.03$} & $92.34$ \scalebox{0.5}{$\pm 0.02$} &
$86.06$ \scalebox{0.5}{$\pm 0.03$} & $4.41$ \scalebox{0.5}{$\pm 0.10$} & $90.48$ \scalebox{0.5}{$\pm 0.07$} 
\\
DIWAE $(\alpha=0.5)$&
$86.46$ \scalebox{0.5}{$\pm 0.02$} & $\bf6.34$ \scalebox{0.5}{$\pm 0.05$} & $\bf92.80$ \scalebox{0.5}{$\pm 0.07$} & 
$\bf85.78$ \scalebox{0.5}{$\pm 0.02$} & $4.47$ \scalebox{0.5}{$\pm 0.02$} & $90.25$ \scalebox{0.5}{$\pm 0.03$} &
$\bf85.55$ \scalebox{0.5}{$\pm 0.02$} & $\bf3.01$ \scalebox{0.5}{$\pm 0.01$} & $\bf88.56$ \scalebox{0.5}{$\pm 0.02$} 
\\
DIWAE $(\alpha=1.0)$& 
$86.51$ \scalebox{0.5}{$\pm 0.02$} & $6.83$ \scalebox{0.5}{$\pm 0.04$} & $93.35$ \scalebox{0.5}{$\pm 0.06$} & 
$\bf85.78$ \scalebox{0.5}{$\pm 0.03$} & $\bf4.21$ \scalebox{0.5}{$\pm 0.03$} & $\bf90.00$ \scalebox{0.5}{$\pm 0.06$} &
$\bf85.55$ \scalebox{0.5}{$\pm 0.02$} & $3.15$ \scalebox{0.5}{$\pm 0.02$} & $88.70$ \scalebox{0.5}{$\pm 0.04$} 
\\
WNI-IWAE &
$\bf86.42$ \scalebox{0.5}{$\pm 0.01$} & $6.68$ \scalebox{0.5}{$\pm 0.01$} & $93.10$ \scalebox{0.5}{$\pm 0.02$} & 
$85.81$ \scalebox{0.5}{$\pm 0.01$} & $4.33$ \scalebox{0.5}{$\pm 0.03$} & $90.14$ \scalebox{0.5}{$\pm 0.04$} &
$85.64$ \scalebox{0.5}{$\pm 0.03$} & $3.10$ \scalebox{0.5}{$\pm 0.01$} & $88.74$ \scalebox{0.5}{$\pm 0.03$} 
\\
\midrule
\end{tabular}

\end{table}

%\todo{maybe add a table of the regularizers and types of models (VAE, IWAE), what each regularizer do and how are they affected by the type of model}

\section{Revisiting \cite{im2017denoising}'s Denoising Variational Autoencoder Analysis}\label{app:dvae}

In \cite{im2017denoising}'s Lemma 1, they considered a joint distribution $p_\theta(x, z)$. They introduced an auxiliary variable $z'$ into their inference model (here $z'$ takes on the role of the perturbed input $\tilde{x} = x + \veps$. To avoid confusion, we stick to the notation used in their Lemma) and considered the inference model
\begin{align}
q_\varphi(z \giv z')q_\psi(z' \giv x).
\end{align}
They considered two ways to use this inference model. The first approach is to marginalize the auxiliary latent variable $z'$. This defines the resulting inference model
\begin{align}
q_\phi(z \giv x) = \int q_\varphi(z \giv z')q_\psi(z' \giv x) dz'.
\end{align}
This yields the lower bound
\begin{align}
\L_a = \Expect_{q_\phi(z \giv x)} \brac{\ln \frac{p_\theta(x, z)}{q_\phi(z \giv x)}}.
\end{align}
Next, they considered an alternative lower bound
\begin{align}
\L_b = \Expect_{q_\varphi(z \giv z')q_\psi(z' \giv x)}\brac{\ln \frac{p_\theta(x, z)}{q_\varphi(z \giv z')}}.
\end{align}
\cite{im2017denoising}'s Lemma 1 claims that 
\begin{enumerate}
\item $\L_a$ and $\L_b$ are valid lower bounds of $\ln p_\theta(x)$
\item $\L_b \ge \L_a$.
\end{enumerate}
Using Lemma 1, \cite{im2017denoising} motivated the denoising variational autoencoder by concluding that it provides a tighter bound than marginalization of the noise variable. Although statement 1 is correct, statement 2 is not. Their proof of statement 2 is presented as follows
\begin{align}
\Expect_{q_\varphi(z \giv z')q_\psi(z' \giv x)}\brac{\ln \frac{p_\theta(x, z)}{q_\varphi(z \giv z')}}
&\stackrel{?}{=} \Expect_{q_\phi(z \giv x)}\brac{\ln \frac{p_\theta(x, z)}{q_\varphi(z \giv z')}} \label{eq:step}\\
&= \Expect_{q_\phi(z \giv x)} \brac{\log p_\theta(x, z)} - \Expect_{q_\phi(z \giv x)} \brac{\ln q_\phi(z \giv z')} \\
&\ge \Expect_{q_\phi(z \giv x)} \brac{\log p_\theta(x, z)} - \Expect_{q_\phi(z \giv x)} \brac{\ln q_\phi(z \giv z')} \\
&= \Expect_{q_\phi(z \giv x)} \brac{\ln \frac{p_\theta(x, z)}{q_\phi(z \giv x)}}
\end{align}
We indicate the mistake with $\stackrel{?}{=}$; their proof of statement 2 relied on the assumption that
\begin{align}
\Expect_{q_\varphi(z \giv z')q_\psi(z' \giv x)}\brac{\ln \frac{p_\theta(x, z)}{q_\varphi(z \giv z')}} = \Expect_{q_\phi(z \giv x)}\brac{\ln \frac{p_\theta(x, z)}{q_\varphi(z \giv z')}}.
\end{align}
Crucially, the RHS is ill-defined since it does not take the expectation over $z'$, whereas the LHS explicitly specifies an expectation over $z' \sim q_\psi(z' \giv x)$. This difference, while subtle, invalidates the subsequent steps. If we fix \cref{eq:step} and attempt to see if the rest of the proof still follows, we will find that
\begin{align}
\Expect_{q_\varphi(z \giv z')q_\psi(z' \giv x)}\brac{\ln \frac{p_\theta(x, z)}{q_\varphi(z \giv z')}}
&= \Expect_{q_\phi(z \giv x)} \brac{\log p_\theta(x, z)} 
- \Expect_{q_\varphi(z \giv z')q_\psi(z' \giv x)} \brac{\ln q_\psi(z \giv z')} \\
&\not\ge \Expect_{q_\phi(z \giv x)} \brac{\log p_\theta(x, z)} 
- \Expect_{q_\varphi(z \giv z')q_\psi(z' \giv x)} \brac{\ln q_\phi(z \giv x)}\\
&= \Expect_{q_\phi(z \giv x)} \brac{\ln \frac{p_\theta(x, z)}{q_\phi(z \giv x)}}.
\end{align}
Indeed, the inequality will point the other way since
\begin{align}
\Expect_{q_\varphi(z \giv z')q_\psi(z' \giv x)} \brac{\ln q_\psi(z \giv z') - \ln q_\phi(z \giv x)}
&= \Expect_{q_\psi(z' \giv x)}\Expect_{q_\varphi(z \giv z')} \ln \frac{q_\varphi(z \giv z')}{q_\phi(z \giv x)} \\
&= \Expect_{q_\psi(z' \giv x)}\KL{q_\varphi(z \giv z')}{q_\phi(z \giv x)} \\
&\ge 0 \implies \\
-\Expect_{q_\varphi(z \giv z')q_\psi(z' \giv x)} \brac{\ln q_\psi(z \giv z')} &\le 
-\Expect_{q_\varphi(z \giv z')q_\psi(z' \giv x)} \brac{\ln q_\phi(z \giv x)}.
\end{align}
Their conclusion that marginalizing over the noise variable results in a looser bound is thus incorrect. In the text (beneath \cite{im2017denoising} Eq. (11)), they further implied that the denoising VAE and standard VAE objectives are not comparable. We show in \cref{prop:strength} that the denoising VAE objective is in fact a lower bound of the standard VAE objective.

\section{Importance-Weighted Stochastic Variational Inference}\label{app:iwsvi}

\begin{figure}[h]
\centering
\begin{subfigure}[b]{0.32\textwidth}
\includegraphics[width=\textwidth]{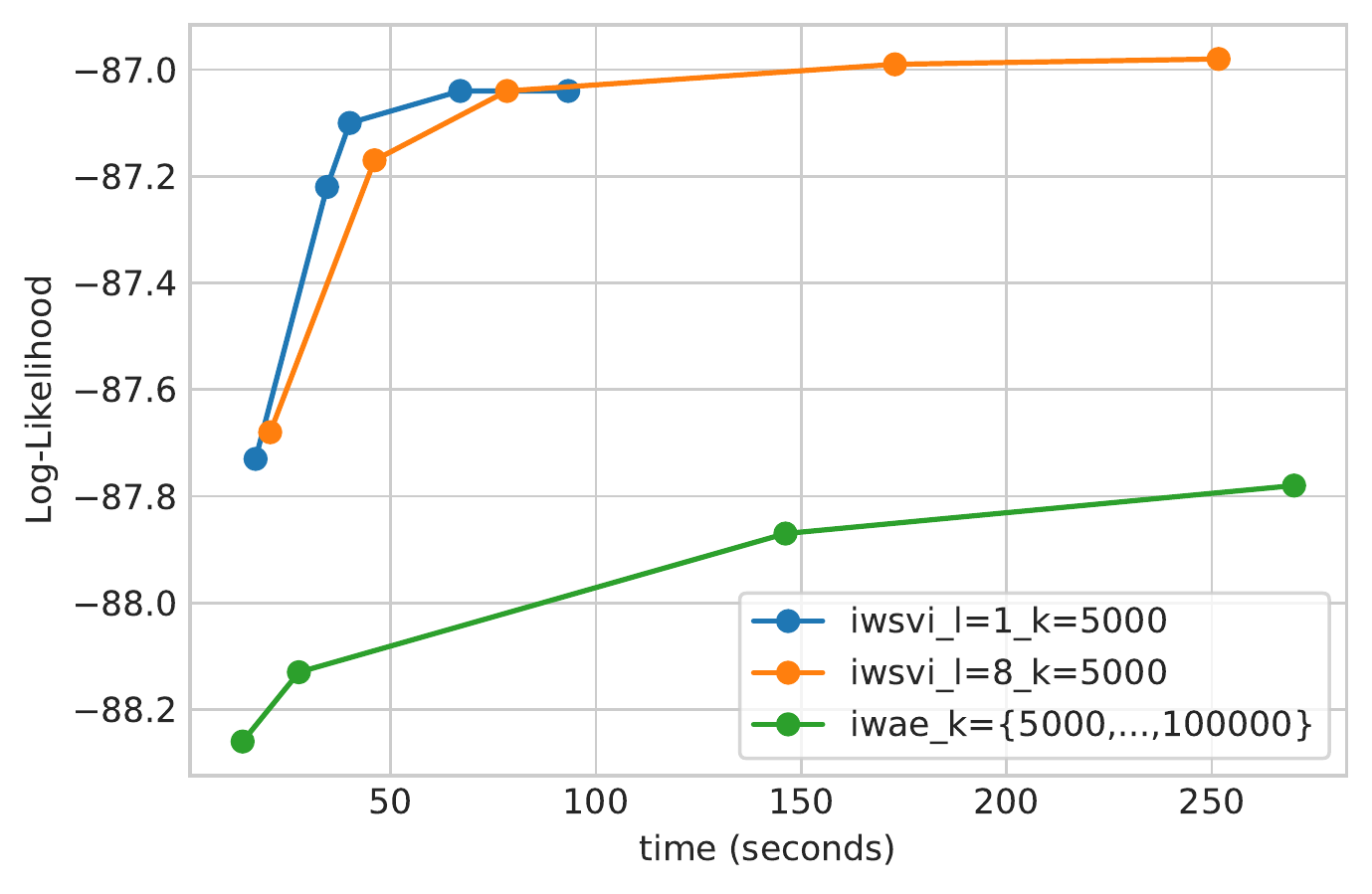}
\caption{}
\label{fig:a}
\end{subfigure}
\begin{subfigure}[b]{0.32\textwidth}
\includegraphics[width=\textwidth]{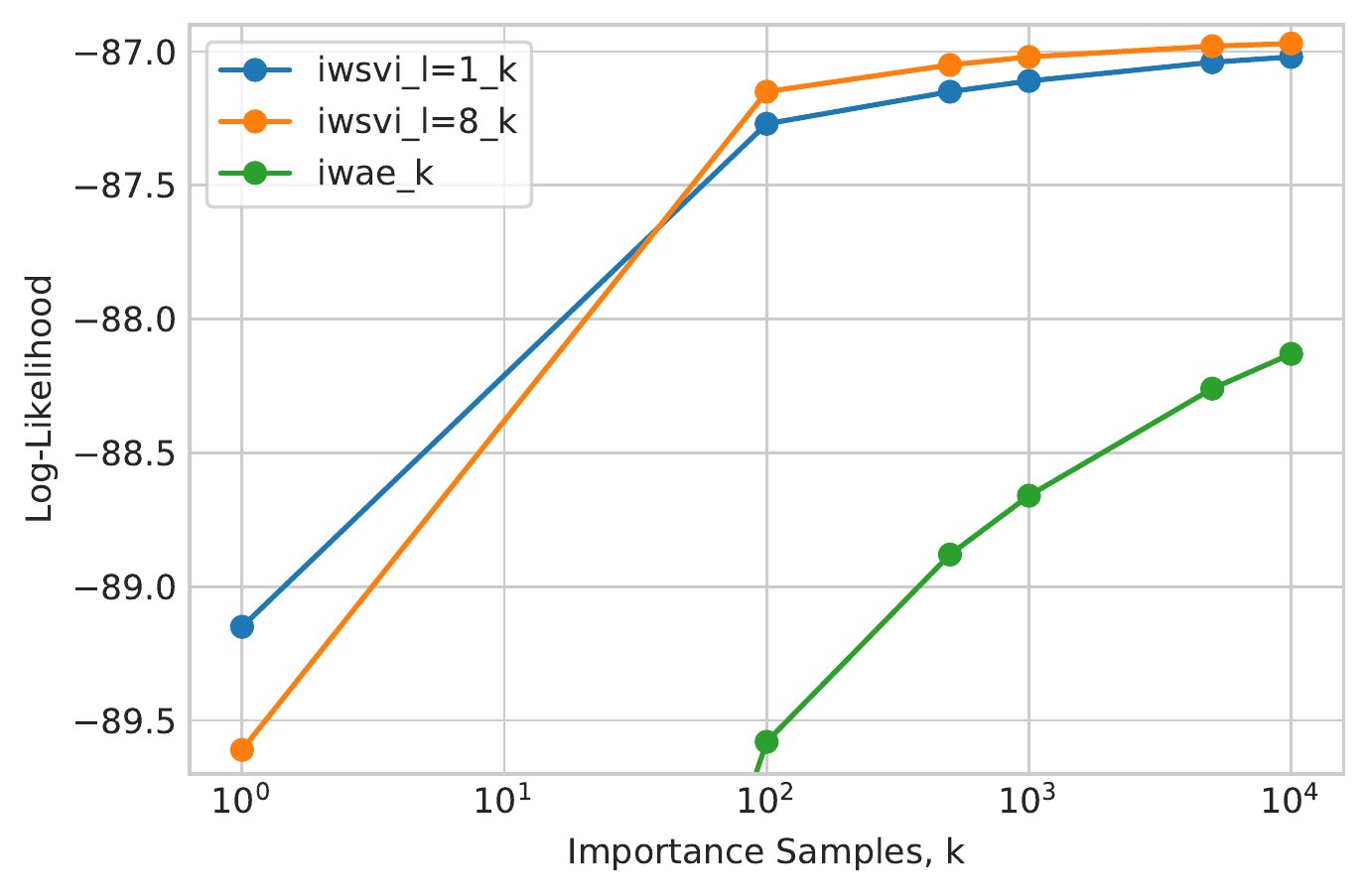}
\caption{}
\label{fig:b}
\end{subfigure}
\begin{subfigure}[b]{0.32\textwidth}
\includegraphics[width=\textwidth]{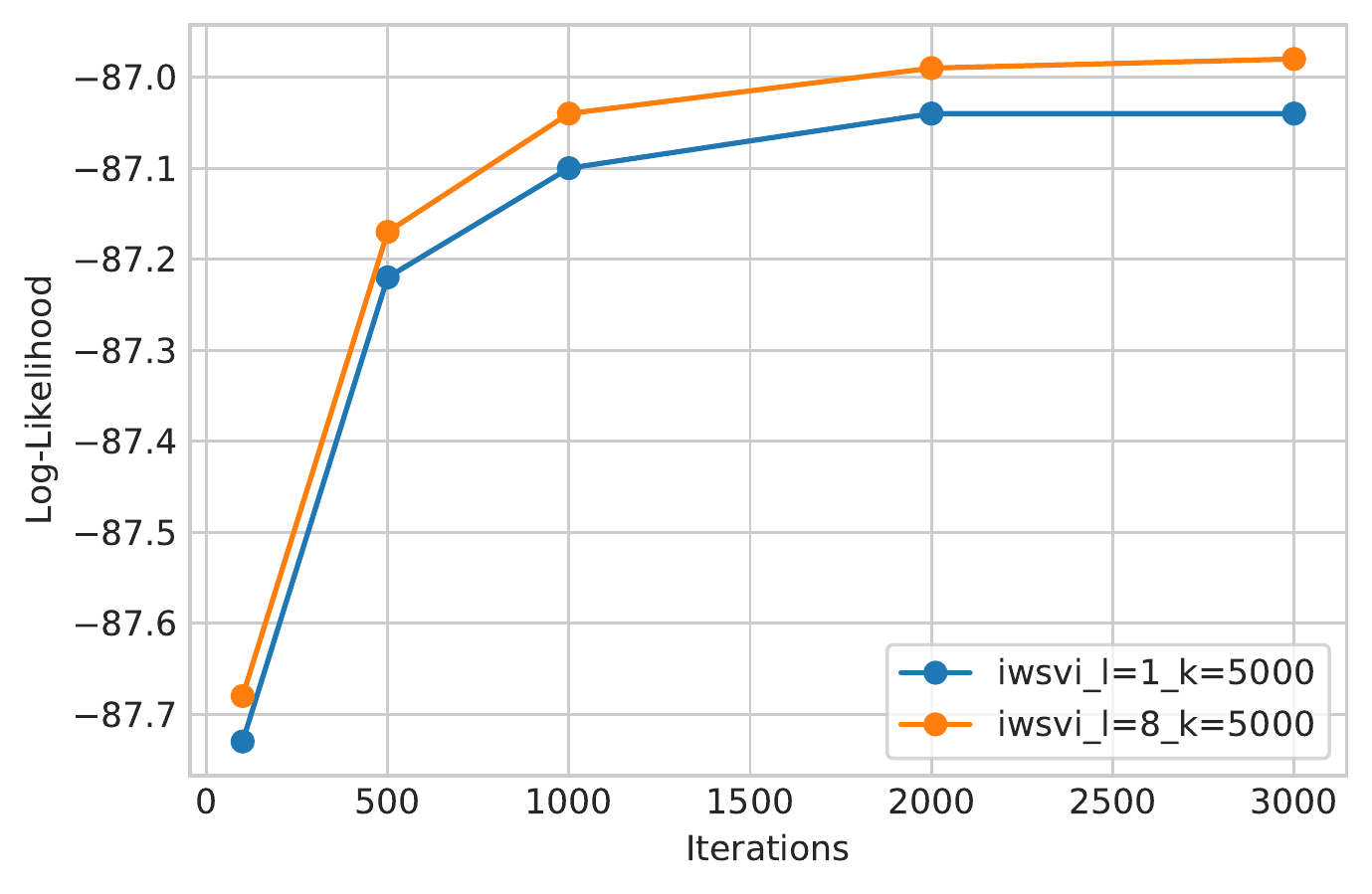}
\caption{}
\label{fig:c}
\end{subfigure}
\caption{Evaluation of IW-SVI versus IWAE-$k$ for a fixed generative model. IW-SVI out-performss IWAE-$k$ on both computation time and number of importance samples needed. Similar to \cite{wu2016quantitative}, we conclude that IWAE-$k$'s poor approximation of the log-likelihood is attributable to an overfit amortized inference model. \cref{fig:a}) IW-SVI computation time depends on the number of gradient update steps. IWAE-$k$ computation time depends on the number of importance samples $k$. IWAE-$100000$ still under-performs IW-SVI $(k=5000,\ell=1, T=100)$, demonstrating the efficacy of IW-SVI. \cref{fig:b}) Comparison of IWAE and IW-SVI $(T=3000)$ for different values of $k$. \cref{fig:c} Comparison of IW-SVI $(k=5000)$ for different values of $T$.}\label{fig:iwsvi}
\end{figure}

We propose a simple method to approximate the marginal $\ln p_\theta(x)$. A common approach for approximating the log marginal is the IWAE-$5000$ \cite{burda2015importance,sonderby2016ladder,li2016renyi,tomczak2017vae}, which proposes to compute $\L_{5000}(x\scolon \theta,\phi)$ where
\begin{align}
\ln p_\theta(x) \ge \L_k(x \scolon \theta, \phi) 
= \Expect_{z_1\ldots z_k \sim q_\phi(z \giv x)} \paren{\ln \frac{1}{k} \sum_{i=1}^k \frac{p_\theta(x, z^{(i)})}{q_\phi(z^{(i)} \giv x)}}.
\end{align}

However, this approach relies on the learned inference model $q_\phi(z \giv x)$, which might overfit to the training set. To address this issue, we propose to perform importance-weighted stochastic variational inference (IW-SVI)
\begin{align}
\ln p_\theta(x) \ge \L_k(x \scolon \theta, q^*_{x,\ell}) &= \Expect_{z_1\ldots z_k \sim q^*_{x,\ell}(z)} \paren{\ln \frac{1}{k} \sum_{i=1}^k \frac{p_\theta(z \giv x)}{q^*_{x,\ell}(z)}}, \\
\text{where } q^*_{x,\ell}  &= \argmax_{q \in \Q} \L_\ell(x \scolon \theta, q).\label{eq:optimize}
\end{align}
The optimization in \cref{eq:optimize} is approximate with $T$ gradient steps. As $k$ and $\ell$ increase, the approximation will approach the true log-likelihood. We approximate log-likelihood over the entire test set using $\Expect_{\hat{p}_\text{test}(x)}\L_k(p_\theta, q^*_{x,\ell} \scolon x)$. To reduce speed and memory cost during the per-sample optimization in \cref{eq:optimize},  we use a large $k = 5000$ but smaller $\ell = 8$, and approximately solved the optimization problem using $T=3000$ gradient steps. In comparison to IWAE-$5000$, we consistently observe significant improvement in the log-likelihood approximation. IW-SVI provides a simple alternative to Annealed Importance Sampling, requiring minimal modification to any existing IWAE-$k$ implementation.

\section{Are High Signal-to-Noise Ratio Gradients Necessarily Better?}\label{app:snr}

Our paper shares a similar high level message with a recent study by \cite{rainforth2018tighter}: that approximating maximum likelihood training is not necessarily better. However, we approach this message in very different ways. \cite{rainforth2018tighter} observed that importance sampling weakens the signal-to-noise ratio of the gradients used to update the amortized inference model. In response, they proposed to increase this ratio by increasing the number of Monte Carlo samples $m$ used to estimate the expectation in \cref{eq:vae}. Under a fixed budget of $T \ge mk$ (where $k$ is the number of importance samples and $m$ is the number of Monte Carlo samples), they observed that it may be desirable to trade off $k$ in order to increase $m$. Given an infinite budget, however, \cite{rainforth2018tighter}'s hypothesis would still conclude to increase $k$ as much as possible in order to approximate maximum likelihood training.

In contrast, we argue that it may be inherently desirable to regularize the maximum likelihood objective, and that amortized inference regularization is an effective means of doing so. From the perspective of generalization, it is also worth wondering whether high signal-to-noise ratio gradients are necessarily better. The desirability of noisy gradients for improving generalization is an active area of research \cite{smith2018sgd,dinh2017sharp,masters2018revisiting,zhang2016understanding}, and an extensive investigation of the role of gradient stochasticity in regularizing the amortized inference model is beyond the scope of our paper. To encourage future exploration in this direction, we show in \Cref{fig:snr_analysis} that the effect of gradient stochasticity is non-negligible. For the standard VAE, we observed that increasing $m$ can cause the model to overfit (on the amortized ELBO objective) over the course of training. Interestingly, we observed that DVAE does not experience this overfitting effect, suggesting that AIR is robust to larger values of $m$.

\begin{figure}[h]
\centering
\includegraphics[width=.9\textwidth]{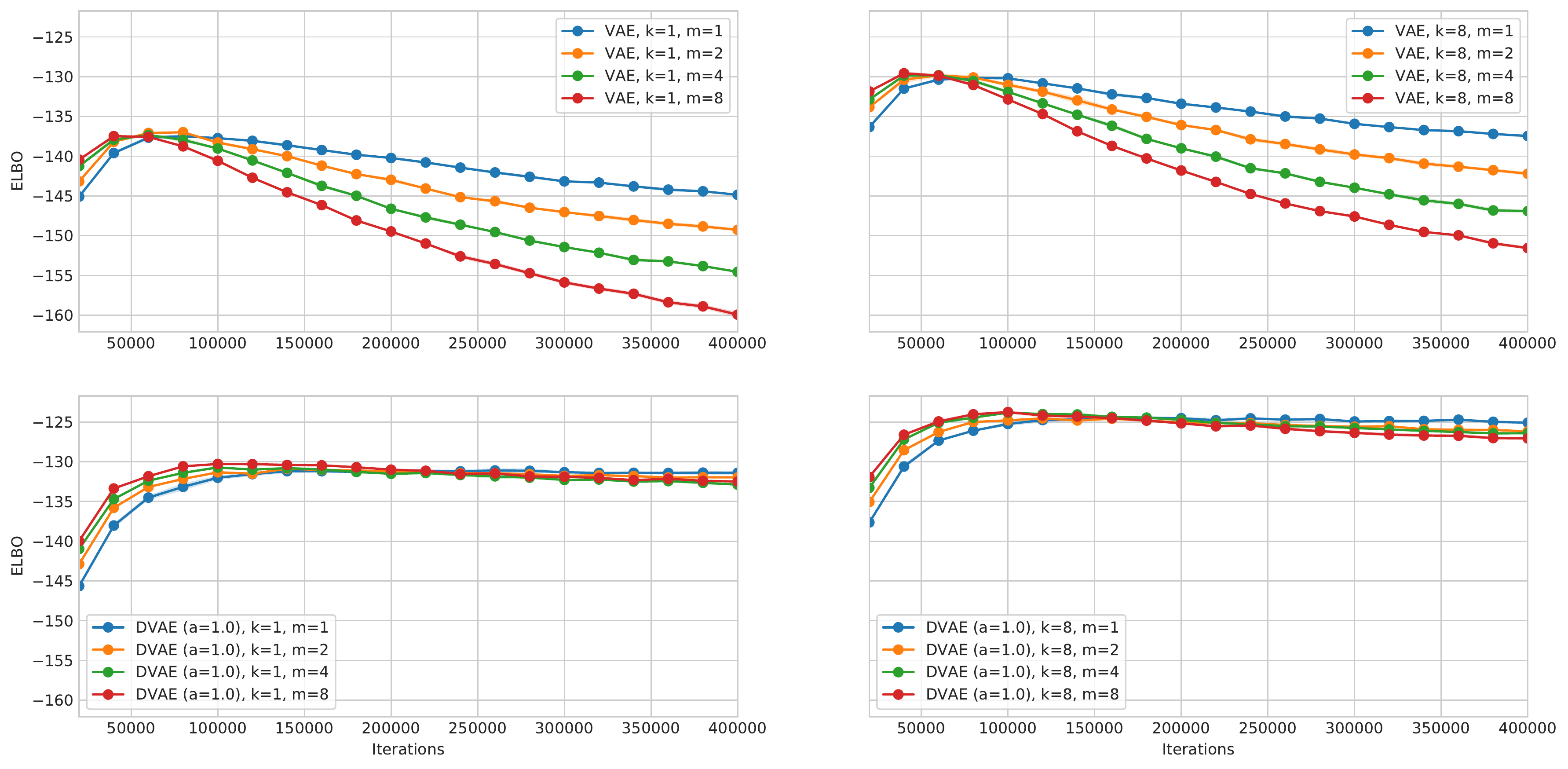}
\caption{Comparison of the test set amortized ELBO during training for VAE and DVAE as we vary the number of importance samples $k$ and the number of Monte Carlo samples $m$. In contrast to DVAE, VAE is susceptible to overfitting when $m$ is increased.}
\label{fig:snr_analysis}
\end{figure}

\clearpage
\section{Experimental Details}\label{app:architecture}
%\rs{Talk about early stopping and hyperparamter tuning}
%\rs{Talk about usage of standard error of mean}

\textbf{Datasets.} We carried out experiments on the static MNIST, static OMNIGLOT, and Caltech 101 Silhouettes datasets. OMNIGLOT was statically binarized at the beginning of training via random sampling using the pixel real-values as Bernoulli parameters. Training, validation, and test split sizes are provided in \cref{table:datasets}. The MNIST validation set was created by randomly holding out $10000$ samples from the original $60000$-sample training set. The OMNIGLOT validation set was similarly created by randomly holding out $1345$ samples from the original $24345$-sample training set. 

\begin{table}[!h]
\setlength\tabcolsep{12pt} % default value: 6pt
\scriptsize
\centering
\caption{Training, validation and test splits for each dataset.} \label{table:datasets}
\begin{tabular}{l|c|c|c}
\toprule
Dataset & 
Training Split &
Validation Split &
Test Split
\\
\midrule
MNIST &
$50000$ &
$10000$ &
$10000$
\\
OMNIGLOT &
$23000$ &
$1345$ &
$8070$
\\
CALTECH &
$4100$ &
$2264$ &
$2307$
\\
\midrule
\end{tabular}
\end{table}

\textbf{Training parameters.} Important training parameters are provided in \cref{table:architectures}. We used the Adam optimizer and exponentially decayed the initial learning rate according to the formula
\begin{align}
\alpha_t = \alpha_0 \cdot (0.1)^{\frac{t}{T - 1}},
\end{align}
where $t \in \seta{0, \ldots, T - 1}$ is the current iteration and $T$ is the total number of iterations. Early-stopping is applied according to IWAE-$5000$ evaluation on the validation set. 

\begin{table}[!h]
\setlength\tabcolsep{6pt} % default value: 6pt
\tiny
\centering
\caption{Training parameters used for each dataset. The same architecture is used for all models, with minor modification for WNI-VAE (to account for the weight-normalization implementation). In all cases, we use a Bernoulli decoder and a Gaussian encoder. Notation: d$300$ denotes a dense layer with ELU activation and $300$ output units. z$64$ denotes 1) a dense layer with $64$ output units (represents the mean of $z$) and 2) a dense layer with softplus activation and $64$ output units (represents the variance of $z$). x$784$ denotes a dense layer with $784$ output units (represents the logits for $x$)} \label{table:architectures}
\begin{tabular}{l|c|c|c|c|c}
\toprule
Dataset & 
Encoder Architecture &
Decoder Architecture &
Initial Learning Rate &
Training Iterations &
Batch Size
\\
\midrule
MNIST (\cref{app:large_encoder})&
d$1000$-d$1000$-d$1000$-z$64$ &
d$300$-d$300$-x$784$ &
$10^{-3}$ &
$1.5 \times 10^{6}$ &
$100$
\\
MNIST &
d$300$-d$300$-z$64$ &
d$300$-d$300$-x$784$ &
$10^{-3}$ &
$1.5 \times 10^{6}$ &
$100$
\\
OMNIGLOT &
d$200$-d$200$-z$64$ &
d$200$-d$200$-x$784$ &
$10^{-3}$ &
$1.5 \times 10^{6}$&
$100$
\\
CALTECH &
d$500$-z$64$ &
d$500$-x$784$ &
$10^{-4}$ &
$4 \times 10^{5}$&
$10$
\\
\midrule
\end{tabular}
\end{table}

\textbf{Regularization strength tuning.} The denoising and weight normalization regularizers have hyperparameters $\sigma$ and $H$ respectively. See \cref{table:strengths} for hyperparameter search space details. We performed a basic grid search and tuned the regularization strength hyperparameters based on the validation set.

\begin{table}[!h]
\setlength\tabcolsep{12pt} % default value: 6pt
\scriptsize
\centering
\caption{The regularization parameter is chosen applied based on hyperparameter tuning on the validation set. Rather than selecting for $\sigma$ or $H$ directly, we reparameterized the search space as $\sigma\cdot\sqrt{d}$ and $\frac{10}{H}$, where $d$ denotes the sample dimensionality, i.e., $\X = \R^d$. Coincidentally, we found that this reparameterization allowed us to use the same search space for both DIWAE and WNI-IWAE. We introduce the convention that setting $\frac{10}{H}$ to zero indicates setting $H = +\infty$. Via this convention, setting $\sigma\cdot\sqrt{d} = \frac{10}{H}=0$ corresponds to the standard VAE. We restricted the search space to the set $\seta{2.5, 5.0, \dots, 17.5}$, deliberately omitting $\seta{0.0}$ to not encompass the baseline (standard VAE).} 
\label{table:strengths}
\begin{tabular}{l|c|cc|cc|cc}
\toprule
& & \multicolumn{2}{c|}{MNIST} & \multicolumn{2}{c|}{OMNIGLOT} & \multicolumn{2}{c}{CALTECH} \\
& $k$ & 
$\sigma\cdot\sqrt{d}$ & $\frac{10}{H}$ &
$\sigma\cdot\sqrt{d}$ & $\frac{10}{H}$ &
$\sigma\cdot\sqrt{d}$ & $\frac{10}{H}$\\
\midrule
\multirow{3}{*}{DIWAE $(\alpha=0.5)$} 
& $ 1$ & $ 5.0$ & - & $ 2.5$ & - & $ 7.5$ & - \\
& $ 8$ & $ 5.0$ & - & $ 5.0$ & - & $ 7.5$ & - \\
& $64$ & $ 7.5$ & - & $ 5.0$ & - & $15.0$ & - \\
\midrule
\multirow{3}{*}{DIWAE $(\alpha=1.0)$} 
& $ 1$ & $ 2.5$ & - & $ 2.5$ & - & $ 5.0$ & - \\
& $ 8$ & $ 5.0$ & - & $ 5.0$ & - & $ 7.5$ & - \\
& $64$ & $ 5.0$ & - & $ 5.0$ & - & $10.0$ & - \\
\midrule
\multirow{3}{*}{WNI-IWAE} 
& $ 1$ & - & $ 5.0$ & - & $ 7.5$ & - & $ 2.5$ \\
& $ 8$ & - & $ 7.5$ & - & $ 7.5$ & - & $ 5.0$ \\
& $64$ & - & $10.0$ & - & $ 7.5$ & - & $12.5$ \\
\midrule
\end{tabular}
\end{table}

\clearpage
\section{Proofs}

\textbf{Remark.} Some of the proofs mention the notion of an infinite capacity $\F$, $\G$ or $\Q$. To clarify, we say that $\F$ has infinite capacity if it is the set of all possible functions that map from $\X$ to $\Q$. Analogously, $\G$ has infinite capacity if it is the set of all possible functions that map from $\Z$ to $\P$. We say that $\Q$ has infinite capacity if it is the set of all possible distributions over the space $\Z$.

\PropKernel*
\begin{proof}
Define $\tilde{x} = x + \veps$ and 
$\phat(x, \tilde{x}) = \phat(x)\Normal(\tilde{x} \giv x, \sigma\I)$. Rewrite the objective as
\begin{align}
R(\theta \scolon \sigma) &= \min_{f \in \F(\Q)}\Expect_{\phat(x, \tilde{x})}\brac{\KL{f(\tilde{x})}{p_\theta(z \giv x)}}\\
&\ge \Expect_{\phat(\tilde{x})}\min_{q \in \Q}\Expect_{\phat(x \giv \tilde{x})}\brac{\KL{q(z)}{p_\theta(z \giv x)}}.
\end{align}
Recall that $\F$ has infinite capacity. This lower bound is tight since we can select $f^*_\sigma \in \F$ such that
\begin{align}
f^*_\sigma(\tilde{x}) = \argmin_{q \in \Q} \Expect_{\phat(x \giv \tilde{x})} \KL{q(z)}{p_\theta(z \giv x)}.\label{eq:prekernel}
\end{align}
Reexpressing \cref{eq:prekernel} by expanding $\phat(x \giv \tilde{x})$ yields \cref{eq:kernel}.
\end{proof}

\ThmSmooth*
\begin{proof} 
Proof provided in two parts.

\textbf{Part 1.} The Kullback-Leibler divergence can be represented as a Bregman divergence associated with the strictly convex log-partition function $A$ of the minimal exponential family as follows
\begin{align}
\KL{\eta}{\eta^{(i)}} = d_A(\eta^{(i)}, \eta) = A(\eta^{(i)}) - A(\eta) - \nabla A(\eta)^\top(\eta^{(i)} - \eta).
\end{align}
Proposition 1 from \cite{banerjee2005clustering} shows that that for any convex combination weights $\set{w_i}, \sum_{i=1}^n w_i = 1$, the minimizer of a weighted average of Bregman divergences is 
\begin{align}
\sum_{i=1}^n w_i x_i = \argmin_{y \in \Omega} \sum_{i=1}^n w_i d_A(x_i, y).
\end{align}
It thus follows that 
\begin{align}
f^*_\sigma(x) 
&= \argmin_{\eta \in \Omega} \sum_{i=1}^n w_\sigma(x, x^{(i)}) \cdot \KL{\eta}{\eta^{(i)}}\\
&= \argmin_{\eta \in \Omega} \sum_{i=1}^n w_\sigma(x, x^{(i)}) \cdot d_A(\eta^{(i)}, \eta) \\
&= \sum_{i=1}^n w_\sigma(x, x^{(i)}) \cdot \eta^{(i)}.
\end{align}
\textbf{Part 2.} We will write the derivative $\nabla_{x} f^*_\sigma(x)$ in matrix form by the following notation
\begin{align*}
\nabla_x W_\sigma(x) &= \left( \begin{array}{ccc} \nabla_x w_\sigma(x, x^{(1)}) & \cdots & \nabla_x w_\sigma(x, x^{(m)}) \end{array} \right) \\
M &= \left( \begin{array}{ccc} \eta^{(1)} & \cdots & \eta^{(m)} \end{array} \right) 
\end{align*}
where we also suppose input space $x$ is $n$-dimensional, latent parameter space $\Omega$ is $d$-dimensional, and there are $m$ training examples.
%\sj{To check during final pass that these notations are consistent} 
Then
\[ \nabla_x f^*_\sigma(x) = M \nabla_x W_\sigma(x)^T  \]
Let $\lVert \cdot \rVert_1$ be the induced 1-norm for matrices, then by the sub-multiplicative property
\begin{align*}
\lVert \nabla_x f^*(x) \rVert_1 \leq \lVert M \rVert_1 \lVert \nabla_x W_\sigma(x)^T \rVert_1
\end{align*}
Since $\lVert M \rVert_1$ is a constant with respect to $\sigma$, we only have to bound $\lVert \nabla_x W_\sigma(x)^T \rVert_1$. To do this we study the derivative of $\nabla_x w_\sigma(x, x^{(i)})$, where
\begin{align*}
\nabla_x w_\sigma(x, x^{(i)}) &= \nabla_x \frac{K_\sigma(x, x^{(i)})}{\sum_j K_\sigma(x, x^{(j)})} \\
&= \frac{ K(x, x^{(i)})\frac{x^{(i)}-x}{\sigma^2} \sum_j K_\sigma(x, x^{(j)}) + K(x, x^{(i)})\sum_j  K(x, x^{(j)})\frac{x-x^{(j)}}{\sigma^2}}{(\sum_j K_\sigma(x, x^{(j)}))^2} \\
&= \frac{ K(x, x^{(i)}) \sum_j K_\sigma(x, x^{(j)}) \frac{x^{(i)} - x^{(j)}}{\sigma^2}}{(\sum_j K_\sigma(x, x^{(j)}))^2} 
\end{align*}
Let $|\cdot|$ denote taking element-wise absolute value, and $x \leq^* y$ denotes for all elements of the vector $|x_i| \leq |y_i|$. By Cauchy inequality and $\lVert \cdot \rVert_2 \leq \lVert \cdot \rVert_1$ we have
\begin{align*}
\nabla_x w_\sigma(x, x^{(i)}) &\leq^* \frac{K(x, x^{(i)}) \sum_j K(x, x^{(j)}) \sum_j |x^{(i)} - x^{(j)}|}{\sigma^2 (\sum_j K_\sigma(x, x^{(j)}))^2}
\leq^* \frac{1}{\sigma^2}\sum_j |x^{(i)} - x^{(j)}|
\end{align*}
Therefore 
\begin{align*}
\sup_x \lVert \nabla_x w_\sigma(x, x^{(i)}) \rVert_1 = O(1/\sigma^2)
\end{align*}
% \begin{align*}
% \sup_x \lVert \nabla_x w_\sigma(x, x^{(i)}) \rVert_1 &= \sup_x \left\lVert \nabla_x \frac{K_\sigma(x, x^{(i)})}{\sum_j K_\sigma(x, x^{(j)})} \right\rVert_1  &= \sup_x \lVert \nabla_x e^{-\frac{\lVert x \rVert^2}{2\sigma^2}} \rVert_1 
% = \sum_{i=1}^n \sup_x \frac{\partial e^{-x_i^2/(2\sigma^2)}}{\partial x_i} 
% \end{align*}
% %\sj{one more thing to do: taking into consideration the normalization}
% Taking first and second order derivatives we have
% \begin{align*}
% \frac{\partial e^{-x_i^2/2\sigma^2}}{\partial x_i} &= -x_i/\sigma^2 e^{-x_i^2/(2\sigma^2)} \\
% \frac{\partial^2 e^{-x_i^2/2\sigma^2}}{\partial x_i^2} &\propto (x_i^2 - \sigma^2) e^{-x_i^2/(2\sigma^2)} 
% \end{align*}
% The supremum for the first order derivative is achieved at $x_i = - \sigma$, so
% \begin{align*}
% \sup_x \lVert \nabla_x w_\sigma(x, x^{(i)}) \rVert_1 &= \frac{d e^{-1/2}}{\sigma} 
% \end{align*}
This gives us a bound on the matrix 1-norm
\begin{align*}
\sup_x \lVert \nabla_x W_\sigma(x)^T \rVert_1 \leq \sup_x \sqrt{mn} \lVert \nabla_x W_\sigma(x)^T \rVert_\infty = \sqrt{mn} \sup_x \max_{i=1}^{n} \lVert \nabla_x w_\sigma(x, x^{(i)}) \rVert_1 = O(1/\sigma^2) 
\end{align*}
Because both $\Omega$ and $\mathcal{X}$ are convex sets, this implies the following Lipschitz property
\[ \frac{\lVert f^*(x_1) - f^*(x_2) \rVert_1}{\lVert x_1 - x_2 \rVert_1} \leqslant \sup_x \lVert \nabla_x f^*(x) \rVert_1 = O(1/\sigma^2)  \]
\end{proof}

\PropStrength*
\begin{proof}
Proof is provided in two parts.

\textbf{Part 1.} Recall that $R$ is always non-negative. Since there exists $\theta \in \Theta$ such that $p_\theta(x, z) = p_\theta(z)p_\theta(x)$, and $f \in \F$ such that $f(x) = p_\theta(z)$, then $\min_\theta R(\theta\scolon \sigma) = 0$ for any choice of $\sigma$. 

\textbf{Part 2.} Let $\veps_1 \sim \Normal(\0, \sigma_1\I)$, $\veps_2 \sim \Normal(\0, \sigma_2\I)$, and $\veps_\delta = \Normal(\0, (\sigma_1 - \sigma_2)\I)$. Then
\begin{align}
R(\theta\scolon \sigma_1)
&= \min_{f \in \F} \Expect_{\veps_1}\Expect_{\phat(x)}\brac{\KL{f(x + \veps_1)}{p_\theta(z \giv x)}}\\
&= \min_{f \in \F} \Expect_{\veps_\delta}\Expect_{\veps_2}\Expect_{\phat(x)}\brac{\KL{f(x + \veps_\delta + \veps_2)}{p_\theta(z \giv x)}}\\
&\ge \Expect_{\veps_\delta}\min_{f \in \F} \Expect_{\veps_2}\Expect_{\phat(x)}\brac{\KL{f(x + \veps_\delta + \veps_2)}{p_\theta(z \giv x)}}.
\end{align}
Since $\F$ is closed under input translation,
\begin{align}
\Expect_{\veps_\delta}\min_{f \in \F} \Expect_{\veps_2}\Expect_{\phat(x)}\brac{\KL{f(x + \veps_\delta + \veps_2)}{p_\theta(z \giv x)}} = R(\theta \scolon \veps_2).
\end{align}
It thus follows that $R(\theta \scolon \sigma_1) \ge R(\theta \scolon \sigma_2)$ for all $\theta \in \Theta$.
\end{proof}

\PropDecoder*
\begin{proof}
For a given inference model $q(z \giv x)$, the optimal generator maximizes the objective
\begin{align}
\max_{g \in \G} \Expect_{\phat(x)}\Expect_{q(z \giv x)} \brac{\ln g(z)(x)} &= \max_{g \in \G} \Expect_{q(x, z)} \brac{\ln g(z)(x)}.\label{eq:g}\\
&= \max_{g \in \G} \Expect_{q(x, z)} \brac{\ln p_{g(z)}(x)} \\&\le \Expect_{q(z)} \max_{p \in \P}\Expect_{q(x \giv z)} \ln p(x)\\
&= \Expect_{q(z)} \max_{\mu \in \M}\Expect_{q(x \giv z)} \ln p_\mu(x),
\end{align}
where $p_\mu$ denotes the distribution $p \in \P$ with associate mean parameter $\mu$. This inequality is tight since we can select $g^* \in \G$ such that
\begin{align}
g^*(z) = \argmax_{\mu \in \M}\Expect_{q(x \giv z)} \ln p_\mu(x).
\end{align}
Recall that the maximum likelihood and maximum entropy solutions are equivalent for an exponential family. From the moment-matching condition of maximum entropy, it follows that
\begin{align}
g^*(z) &= \argmax_{\mu \in \M}\Expect_{q(x \giv z)} \ln p_\mu(x) \\
&= \Expect_{q(x \giv z)} \brac{T(x)} \\
&= \sum_{i=1}^n q(x^{(i)} \giv z) \cdot T(x^{(i)})\\
&= \sum_{i=1}^n \paren{\frac{q(z \giv x^{(i)})}{\sum_j q(z \giv x^{(j)})} \cdot T(x^{(i)})}.
\end{align}
\end{proof}

\PropIWAE*
\begin{proof}
Proof is provided in two parts.

\textbf{Part 1.} The relevant assumptions are that there exists $\theta \in \Theta$ such that $p_\theta(x, z) = p_\theta(z)p_\theta(x)$, and $f \in \F_H$ such that $f(x) = p_\theta(z)$. Note that $R_k$ is always non-negative. It follows readily that $\min_\theta R_k(\theta\scolon \sigma, \F_H) = 0$ for any choice of $k$. 

\textbf{Part 2.} We define $\L_k$ as
\begin{align}
\L_k &= \Expect_{\phat(x)} \ln p_\theta(x) - R_k(\theta \scolon \sigma, \F_H) \\
&= \max_{f \in \F_H} \Expect_{\phat(x)}\Expect_{\veps} \Expect_{z_1\ldots z_k \sim f(x + \veps)}\brac{\ln \frac{1}{k} \sum_{i=1}^k \frac{p_\theta(x, z_i)}{f(x + \veps)(z_i)}}.
\end{align}
It suffices to show that $\L_{k} \ge \L_{m}$ when $k \ge m$.
We adapt the proof from \cite{burda2015importance}'s Theorem 1 as follows. Let $|I| = m$ denote a uniformly distributed subset of distinct indices from $\seta{1, \ldots, k}$. For any choice of $f \in \F_H$, the following inequality holds
\begin{align}
&\Expect_{\phat(x)}\Expect_{\veps} \Expect_{z_1\ldots z_k \sim f(x + \veps)}\brac{\ln \frac{1}{k} \sum_{i=1}^k \frac{p_\theta(x, z_i)}{f(x + \veps)(z_i)}}\\
&= \Expect_{\phat(x)}\Expect_{\veps} \Expect_{z_1\ldots z_k \sim f(x + \veps)}\brac{\ln \Expect_{I=\seta{i_1\ldots i_m}}\brac{\frac{1}{m} \sum_{j=1}^m \frac{p_\theta(x, z_{i_j})}{f(x + \veps)(z_{i_j})}}}\\
&\ge \Expect_{\phat(x)}\Expect_{\veps} \Expect_{z_1\ldots z_k \sim f(x + \veps)}\Expect_{I=\seta{i_1\ldots i_m}}\brac{\ln \frac{1}{m} \sum_{j=1}^m \frac{p_\theta(x, z_{i_j})}{f(x + \veps)(z_{i_j})}}\\
&= \Expect_{\phat(x)}\Expect_{\veps} \Expect_{z_1\ldots z_m \sim f(x + \veps)}\brac{\ln \frac{1}{m} \sum_{i=1}^m \frac{p_\theta(x, z_{i})}{f(x + \veps)(z_{i}))}}.
\end{align}
It thus follows that $\L_k \ge \L_m$.
\end{proof}

%\rs{Check that I don't have $\F_K$. Check that notation is consistent. Check for spelling and grammar. Check that all proofs are provided. Decide if Appendix A should be mentioned in main text. Check table formatting. Check if plots are consistent with numbers---eyeball. Flesh out missing definitions (e.g. what is $i_m$) in IWAE proof.}
\end{document}